\newtheorem{definition}{Definition}
\newtheorem{example}{Example}
\newtheorem{assumption}{Assumption}
\newtheorem{proposition}{Proposition}
\newtheorem{lemma}{Lemma}
\newtheorem{corollary}{Corollary}
\newtheorem{remark}{Remark}
\newtheorem{theorem}{Theorem}
  \newenvironment{proofsketch}{%
  \proof}{\endproof}
\def\*#1{\boldsymbol{#1}} 
\newcommand{\myitem}[1]{%
\item[#1]\protected@edef\@currentlabel{#1}%
}
\newcommand{\acro}[1]{\textsc{#1}\xspace}
\newcommand{\CL}{\acro{CL}}
\newcommand{\ICL}{Idealized \acro{CL}}
\newcommand{\OICL}{optimal Idealized \acro{CL}}
\newcommand{\X}{\mathcal{X}}
\newcommand{\Y}{\mathcal{Y}}
\newcommand{\F}{\mathcal{F}}
\newcommand{\A}{\mathcal{A}}
\newcommand{\C}{\mathcal{C}}
\newcommand{\Ahat}{\widehat{\mathcal{A}}}
\newcommand{\Info}{\mathcal{I}}
\newcommand{\sat}{\acro{Sat}}
\newcommand{\I}{\acro{I}}
\newcommand{\E}{\acro{E}}
\newcommand{\NPC}{\acro{NP-complete}}
\newcommand{\NPH}{\acro{NP-hard}}
\icmltitlerunning{Optimal Continual Learning has Perfect Memory and is \NPH}
\begin{document}

\twocolumn[
\icmltitle{Optimal Continual Learning has Perfect Memory and is \NPH}



\icmlsetsymbol{equal}{*}

\begin{icmlauthorlist}
\icmlauthor{Jeremias Knoblauch}{X}
\icmlauthor{Hisham Husain}{Y}
\icmlauthor{Tom Diethe}{Z}
\end{icmlauthorlist}

\icmlcorrespondingauthor{Jeremias Knoblauch}{j.knoblauch\@warwick.ac.uk}

\icmlaffiliation{X}{Department of Statistics, Warwick University \& The Alan Turing Institute, London}
\icmlaffiliation{Y}{Australian National University, Canberra \& Data61, Sydney}
\icmlaffiliation{Z}{Amazon}

\icmlkeywords{Continual Learning; Deep Learning Theory; Neural Networks; Complexity}

\vskip 0.3in
]



\printAffiliationsAndNotice{} 

\begin{abstract}
Continual Learning (\CL) algorithms incrementally learn a predictor or representation across multiple sequentially observed tasks.
%
%
Designing \CL algorithms that perform reliably and avoid so-called \textit{catastrophic forgetting} has proven a persistent challenge.
The current paper develops a theoretical approach that explains why.
%
%
In particular, we derive the computational properties which \CL algorithms would have to possess in order not to suffer from catastrophic forgetting.
%
%
%
%
Our main finding is that such \textit{optimal} \CL algorithms generally solve an \NPH problem and will require perfect memory to do so.
The findings are of theoretical interest, but also explain the excellent performance of \CL algorithms using experience replay, episodic memory and core sets relative to regularization-based approaches.
%
%
\end{abstract}

\section{Introduction}
\label{sec:introduction}

Continual Learning (\CL) is a machine learning paradigm which takes inspiration from the ways in which biological organisms learn in the real world: Rather than observing a set of independent and identically distributed observations, \CL seeks to design algorithms that sequentially learn from observations corresponding to different tasks.
Unlike biological organisms, the artificial neural networks used for solving this problem suffer from \textit{catastropic forgetting} \citep{CatastrophicForgettingOriginal}.
Simply put, this phenomenon describes that sequentially learning on an increasing number of  tasks will eventually yield increasingly poor representations of and predictions on previously observed tasks.

\CL algorithms are not only a challenging research topic, they are also of tremendous practical importance \citep[see e.g.][]{CLInPractice}: Often, it is impractical or even impossible to re-train a model every time new data arrives.
For example, data may be too sensitive or expensive to store long term.
Even if the storage of data is not a problem, increasingly complex models may make re-training computationally prohibitive.
%

%

%
While algorithms tackling the issue have steadily improved over the last few years, the \CL problem has remained a persistently difficult challenge. 
In response, a growing body of research has  designed novel \CL algorithms based on different paradigms.
%
In an attempt to structure the research output of the field, several paper have sought to classify these different paradigms \citep{CLReview, GalUnification, threeScenarios}.
Broadly speaking, one can divide existing \CL algorithms into one of three families: Regularization-based approaches \citep[e.g.][]{EWC, synapticIntelligence, progressAndCompress, EWCLaplace, RiemannianWalk, FineGrainedCL}, replay-based approaches \citep{deepGenerativeReplay, GradientEpisodicMemory, DeepGenerativeCL, ExperienceReplay} as well as Bayesian and variationally Bayesian approaches \citep[e.g.][]{VariationalCL, naturalVCL, ContinualGP, TitsiasCL}.
Other successful methods include learning a set of new parameters per task without discarding previously learnt ones \citep[e.g.][]{ProgressiveNNs} as well as methods inspired by nearest-neighbour type considerations \citep[e.g.][]{icarl}.

For the purposes of this paper, it is important to contrast the \CL setting with more traditional sequential learning paradigms.
The idea of learning inputs sequentially is all but new---in fact, \citet{Lauritzen1880} dates its origins as early as the 1880s.
In clear contrast to the \CL setting however, more traditional approaches rely on dependence assumptions between consecutive observations and tasks.
For example, the work of \citet{OnlineBayes1}  introduces an  approximately Bayesian algorithm for neural networks that can be updated continuously in an on-line fashion.
In clear opposition to the \CL paradigm, the algorithm assumes that all observations were generated by the same and correctly specified data generating mechanism.
Though restrictive, this also enables the authors to demonstrate asymptotic efficiency.
Similarly, the contributions to sequential and streaming learning made by \citet{OnlineBayes2}, \citet{StreamingVB} or \citet{PopulationPosteriorStreaming} are geared towards streaming settings in which the differences in observations vary only very slowly. 
Clearly, these methods are feasible only on a restricted range of problem settings. While this makes them less universally applicable, it also makes them reliable and their failures more interpretable.
%
In clear distinction to this, \CL algorithms seek to mimic biological brains and thus do not impose assumptions about the dependence between observations and tasks.

This has led to methods that can produce reasonable results over a wider range of settings, but also implies that \CL algorithms generally have no clear performance guarantees.
Worse still, it often leaves open in which contexts these algorithms can be expected to function reliably \citep[see e.g.][]{threeScenarios}.
%
%
%
%
Does this mean that it is generally computationally impossible to produce reliable \CL algorithms without making assumptions on the task distributions? 
In this paper, we develop theoretical arguments which confirm this suspicion: Optimal \CL algorithms would have to solve an \NPH problem and perfectly memorize the past.
This is not a purely negative result, however: The requirement of perfect memory in particular is of fundamental practical interest. 
Specifically, it explains recent results that favour approaches based on replay, episodic memory and core sets relative to regularization-based approaches \citep[see e.g.][]{VariationalCL, threeScenarios}.
Thus, the findings also show that methods (approximately) recalling or reconstructing observations from previously observed tasks will be the most promising in developing reliable \CL algorithms.

In this paper, we introduce a definition of \CL  wide enough to encompass the large number of competing approaches introduced over the last few years.
Further, we define an equally flexible notion of \textit{optimality} for \CL algorithms.
In the context of this paper, these optimality criteria are used to rigorously define what catastrophic forgetting entails: 
A \CL algorithm which is optimal with respect to a given criterion avoids catastrophic forgetting (as formalized by this criterion).
We then ask a central question:
\begin{quote}
    \textit{What are the  computational properties of an optimal \CL algorithm?}
\end{quote}

In answering this question, we develop new insights into the design requirements for  \CL algorithms that avoid catastrophic forgetting and provide the first thoroughly theoretical treatment of \CL algorithms:
\begin{itemize}
    \myitem{(1)} 
    We show that without any further assumptions, the optimality of \CL algorithms can be studied with the tools of set theory (Lemma \ref{lemma:OCL=OICL}), which drastically simplifies the subsequent analysis.
    \myitem{(2)}
    We show that optimal \CL algorithms can solve a version of the set intersection decision problem (Lemma \ref{lemma:decision_problem}). Crucially, this decision problem will generally be \NPC, meaning that the optimal \CL algorithm itself is \NPH (Theorem \ref{thm:NP}; Corollary \ref{corollary:NPH}).
    \myitem{(3)}
    We define the notion of an equivalence sets and use their properties (Lemma \ref{lemma:equivalence_set_properties}) to motivate the definition of perfect memory: Specifically, we say that a \CL algorithm has perfect memory if it stores at least one element from each equivalence set (Definition \ref{definition:perfect_memory}).
    \myitem{(4)}
    Re-using the decision problem of Lemma \ref{lemma:decision_problem}, we show that optimal \CL 
    has perfect memory under mild regularity conditions (Theorem \ref{thm:optimal_CL_perfect_memory}; Corollary \ref{corollary:optimal_CL_perfect_memory}).
\end{itemize}

Our findings illuminate that \CL algorithms can be seen as polynomial time heuristics targeted at solving an \NPH problem.
Further, they explain why mimicking the perfect memory requirement of optimal \CL through  memorization heuristics generally outperforms \CL algorithms based on regularization heuristics.
%
Throughout, we make an effort to provide proof sketches for all of the most important findings. Detailed derivations for all claims can be found in the Appendix.

%
%


%
%

The remainder of this paper is structured as follows: 
In Section \ref{sec:prelim}, we introduce basic notation and concepts. 
Next, we define \CL algorithms and their optimality in Section \ref{sec:optimal_CL}.
%
%
%
We then show that any optimal \CL algorithm can be expressed in an idealized alternative form in Section \ref{sec:idealization}. 
This idealized form is interesting because it drastically simplifies the analysis.
Fourth, we show that an optimal \CL algorithm can solve a set intersection decision problem. Under mild conditions, this decision problem is \NPC, which we use to prove that the corresponding optimization problem  (i.e., optimal \CL) is \NPH in Section \ref{sec:NP}. 
Fifth, in Section \ref{sec:memory} we define a notion of perfect memory that is suitable for \CL algorithms. We then demonstrate that optimal \CL algorithms will generally have perfect memory.
Lastly, Section \ref{sec:consequences_for_CL_design} provides a brief discussion of the implications of our results for \CL algorithm design.

\section{Preliminaries}
\label{sec:prelim}

Throughout, we deal with random variables $\*X_t, \*Y_t$. 
Realizations of the random variable $\*X_t$ live on the input space $\X$ and provide information about the random outputs $\*Y_t$ with realizations on the output space $\Y$.
Throughout, $\mathcal{P}(A)$ denotes the collection of all probability measures on $A$.

\begin{definition}[Tasks]
    For a number $T \in \mathbb{N}$ and random variables $\{(\*X_t, \*Y_t)\}_{t=1}^T$ defined on the same spaces $\X$ and $\Y$, 
    the random variable $(\*X_t, \*Y_t)$ is the $t$-th task, and its probability space is $(\X_t \times \Y_t, \*\Sigma, \mathbb{P}_{t})$, where $\*\Sigma$ is a $\sigma$-algebra and $\mathbb{P}_{t}$ a probability measure on $\X_t \times \Y_t \subseteq \X \times \Y$.
\end{definition}



Given a sequence of samples from task-specific random variables, a \CL algorithm sequentially  learns a predictor for $\*Y_t$ given $\*X_t$.
This means that there will be some hypothesis class $\F_{\*\Theta}$ consisting of conditional distributions which allow (probabilistic) predictions about likely values of $\*Y_t$.

\begin{definition}[\CL hypothesis class]
     The \CL hypothesis class $\F_{\*\Theta}$ is parameterized by $\*\Theta$: For any $f\in\F_{\*\Theta}$, there exists a $\*\theta \in \*\Theta$ so that $f_{\*\theta} = f$.
     More precisely, $\F_{\*\Theta} \subset \mathcal{P}(\Y)^{\X}$ if the task label is not conditioned on. 
     Alternatively, $\F_{\*\Theta} \subset \mathcal{P}(\Y)^{\X \times \{1,2,\dots T\}}$ if the label is conditioned on.
\end{definition}
\begin{remark}
Note that while this formulation may seem to exclude Bayesian approaches at first glance, this is not the case.
In fact, one simply notes that a posterior distribution acts as an (infinite-dimensional) parameter:
Specifically, suppose we have some model $m_{\*\kappa}$ parameterized by a finite-dimensional parameter $\*\kappa \in \*K$ and want to form a posterior belief about it.
In this case, we could recover the Bayesian approach by setting $\*\Theta = \mathcal{P}(\*K)$ to be the collection of possible posteriors and $f_{\*\theta}(x) = \int_{\*K}m_{\*\kappa}(x)d\*\theta(\*\kappa)$.
\end{remark}
%
%
\begin{remark}
    The set-valued indicator functions $1_{y} = 1_{\{y\}}$ are elements of $\mathcal{P}(\Y)$ for all $y \in \Y$. Thus, $\Y^{\X} \subset \mathcal{P}(\Y)^{\X}$ and $\Y^{\X\times \{1,2,\dots T\}} \subset \mathcal{P}(\Y)^{\X\times \{1,2,\dots T\}}$. In other words, defining the hypothesis class as conditional distributions also recovers deterministic input-output mappings (via degenerate conditional distributions).
    For example, one may choose $h \in \Y^{\X}$ (or $h \in \Y^{\X \times \{1,2,\dots T\}}$) and construct $p(\*Y|x) = 1_{h(x) = y}\cdot y$ (or $p(\*Y|x, t) = 1_{h(x,t) = y}\cdot y$). 
    \label{remark:deterministic_mappings}
\end{remark} 
%
\begin{remark}
    All results derived and definitions provided in the remainder can be modified in obvious ways to account for the case where $\F_{\*\Theta} \subseteq \mathcal{P}(\Y)^{\X \times \{1,2,\dots T\}}$. To keep notation as simple as possible however, we will assume that $\F_{\*\Theta} \subseteq \mathcal{P}(\Y)^{\X}$. 
\end{remark}

\section{Continual Learning \& Optimality}
\label{sec:optimal_CL}

Having defined both tasks and hypothesis classes, we now define the collection of procedures that constitute \CL algorithms.
To the best of our knowledge, this definition is wide enough to encompass any existing \CL algorithm.
Figure \ref{Fig:CL_MNIST} visualizes this definition.

\begin{definition}[Continual Learning]
     For a \CL hypothesis class $\F_{\*\Theta}$, $T \in \mathbb{N}$ and any sequence of probability measures $\{\widehat{\mathbb{P}}_t\}_{t=1}^T$ such that $\widehat{\mathbb{P}}_t \in \mathcal{P}(\Y_t)^{\X_t} \subseteq \mathcal{P}(\Y)^{\X}$,
     \CL algorithms are specified by functions
     \begin{IEEEeqnarray}{rCl}
        \Ahat_{\*\I}: \*\Theta \times \Info \times \mathcal{P}(\Y)^{\X} & \to &  \Info \nonumber \\
        \Ahat_{\*\theta}: \*\Theta \times \Info \times \mathcal{P}(\Y)^{\X} & \to &  \*\Theta, \nonumber 
        \nonumber
     \end{IEEEeqnarray}
     where $\Info$ is some space that may vary between different \CL algorithms. 
     Given $\A_{\*\I}$ and $\A_{\*\theta}$ and some initializations $\*\theta_0$ and $\*\I_0$, \CL defines a procedure given by 
     \begin{IEEEeqnarray}{rCl}
        \*\theta_1 &=& \Ahat_{\*\theta}(\*\theta_0, \I_0, \widehat{\mathbb{P}}_1) \nonumber \\
        \I_1 & = & \Ahat_{\*\I}(\*\theta_1, \I_0, \widehat{\mathbb{P}}_1) \nonumber \\
        \*\theta_2 &=& \Ahat_{\*\theta}(\*\theta_1, \I_1, \widehat{\mathbb{P}}_2) \nonumber \\    
        \I_2 & = & \Ahat_{\*\I}(\*\theta_2, \I_1, \widehat{\mathbb{P}}_2) \nonumber \\
        &\dots& \nonumber \\
        \*\theta_T &=& \Ahat_{\*\theta}(\*\theta_{T-1}, \I_{T-1}, \widehat{\mathbb{P}}_T) \nonumber  \\
        \I_T & = & \Ahat_{\*\I}(\*\theta_{T}, \I_{T-1}, \widehat{\mathbb{P}}_T). \nonumber 
     \end{IEEEeqnarray}
     \label{def:CL}
\end{definition}

\begin{remark}
    In practice, the probability measures $\widehat{\mathbb{P}}_t$ will be empirical measures of $(\*X_t, \*Y_t)$ that are constructed from a finite number of samples.
\end{remark}
\begin{remark}
    An extremely attractive feature of the above definition is its generality.
    In partiular, the quantities $\I_t \in \Info$ are interpretable as \textit{any} kind of additional information carried forward through time. 
    While the role of these objects will differ between \CL algorithms, our definition is suitable to describe all of them.
    For example, in \textit{elastic weight consolidation} \citep{EWC, EWCLaplace}, $\I_t$ will be a diagonalized approximation of Fisher information.
    In contrast, \textit{variational continual learning} \citep{VariationalCL} will store the approximate posterior on the previous $t-1$ tasks as well as core sets for the previous tasks in $\I_t$.
    Generally, $\I_t$ dictates the memory requirements of any \CL algorithm.
    While the memory requirement is usually constant in the number of tasks, the \CL algorithm in \citet{ProgressiveNNs} would induce linearly growing memory  requirements, as it carries all previously fitted parameters $\{\*\theta_i\}_{i=1}^t$ forward in time.
    %
    %
    %
    %
\end{remark}
\begin{remark}
    Throughout the paper, whenever we write $\*\theta_t$, this value should be understood as a recursively defined function of all previously observed tasks $\{\widehat{\mathbb{P}}_i\}_{i=1}^t$:
    \begin{IEEEeqnarray}{rCl}
        \*\theta_t & = &
        \Ahat_{\*\theta}(\*\theta_{t-1}, \*I_{t-1}, \widehat{\mathbb{P}}_t)
        \nonumber \\
        & = & 
        \Ahat_{\*\theta}(
            \Ahat_{\*\theta}(\*\theta_{t-2}, \*I_{t-2}, \widehat{\mathbb{P}}_{t-1}), 
            \*I_{t-1}, 
            \widehat{\mathbb{P}}_t)
        \nonumber \\
        & = & 
        \Ahat_{\*\theta}(
            \Ahat_{\*\theta}(\*\theta_{t-2}, \*I_{t-2}, \widehat{\mathbb{P}}_{t-1}), 
            \Ahat_{\*\I}(\*\theta_{t-2}, \*I_{t-2}, \widehat{\mathbb{P}}_{t-1}), 
            \widehat{\mathbb{P}}_t)    
        \nonumber \\
        & = & \dots \nonumber
    \end{IEEEeqnarray}
    Clearly, a similar logic applies to the information $\I_t$ passed forward through time.
    Put differently, whenever we write $\*\theta_t$ and $\I_t$ throughout this paper, it is instructive to think about them as functions evaluated at all previous tasks, i.e.
    %
    \begin{IEEEeqnarray}{rCl}
        \*\theta_t & = &
        \mathcal{B}_{\*\theta}\left(\{\widehat{\mathbb{P}}_i\}_{i=1}^t\right) \nonumber \\
        \I_t & = &
        \mathcal{B}_{\I}\left(\{\widehat{\mathbb{P}}_i\}_{i=1}^t\right), \nonumber 
    \end{IEEEeqnarray}
    for functions $\mathcal{B}_{\*\theta}, \mathcal{B}_{\I}$ specified implicitly via $\Ahat_{\*\theta}$ and $\Ahat_{\*I_t}$.
\end{remark}

\begin{figure}[b!]
        \begin{center}
            \centerline{\includegraphics[trim= {0cm 0.0cm 0cm 0.0cm}, clip, 
            width=1.00\columnwidth]{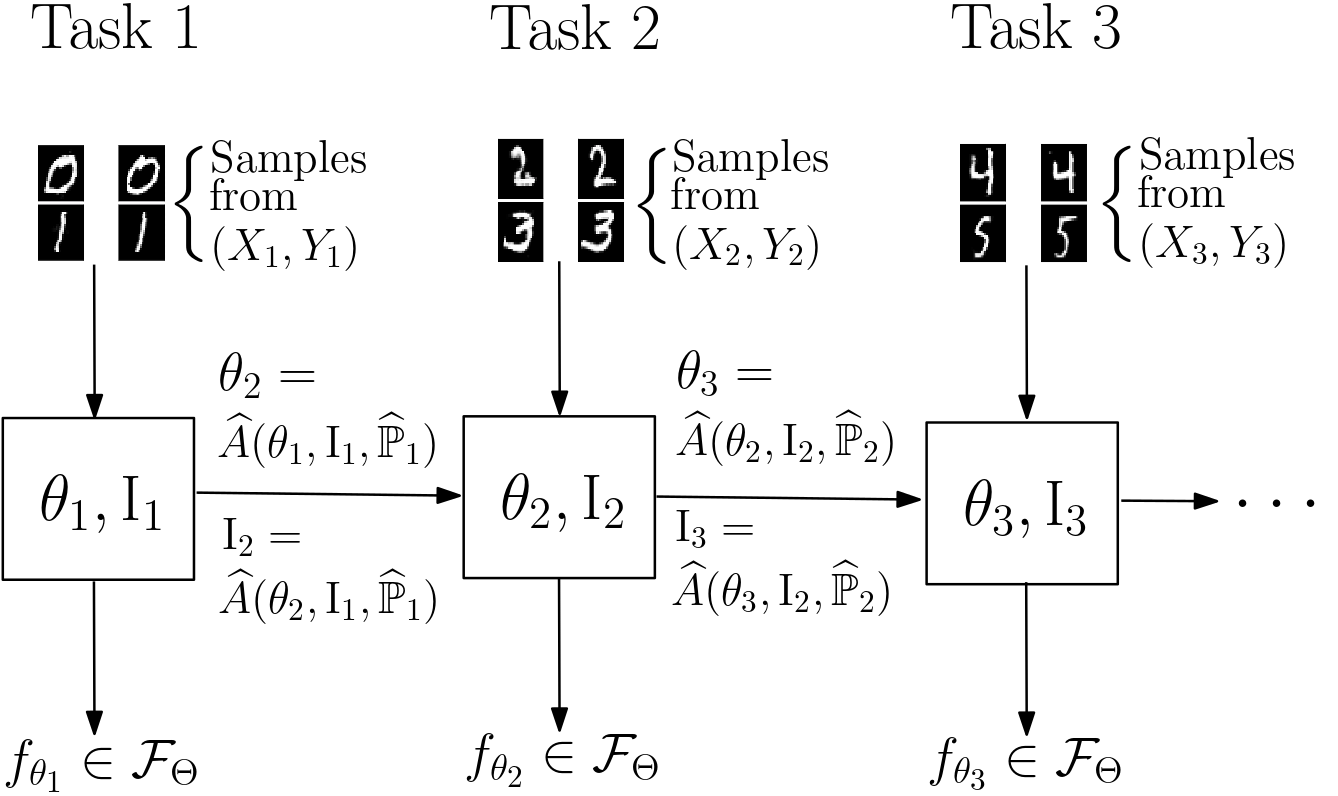}}
            \caption{
            Schematic of a generic \CL algorithm, using notation introduced in Section \ref{sec:prelim} and Definition \ref{def:CL}.
            }
            \label{Fig:CL_MNIST}
        \end{center}
        \vskip -0.2in
\end{figure}

While the literature on \CL has studied the problem of \textit{catastrophic forgetting} empirically, to the best of our knowledge no previous theoretical study has been attempted.
Thus, we first need to introduce a formal way of assessing whether a \CL algorithm suffers {catastrophic forgetting}.
As different researchers might disagree on the precise meaning of catastrophic forgetting, our formalism is very flexible.
In particular, all that it needs is an arbitrary binary-valued optimality criterion $\C$, whose function is to assess whether or not information of a task has been retained ($\C=1$) or forgotten ($\C=0$).
According to this formalism, a \CL algorithm avoids catastrophic forgetting (as judged by the criterion $\C$) if and only if its output at task $t$ is guaranteed to satisfy $\C$ on all previously seen tasks.
%
%
%
%
%
%
%
In this context, different ideas about the meaning of catastrophic forgetting would result in different choices for $\C$.
As we will analyze \CL with the tools of set theory, it is also convenient to define the function $\sat$, which maps from task distributions into the subsets consisting of all values in $\*\Theta$ which satisfy the criterion $\C$ on the given task.

\begin{definition}
     For an optimality criterion $\C: \*\Theta \times \mathcal{P}(\X \times \Y) \to \{0,1\}$
     and a set $\mathcal{Q} \subseteq \mathcal{P}(\X\times\Y)$ of task distributions,
     the function $\sat: \mathcal{P}(\X \times \Y) \to 2^{\*\Theta}$ defines the subset of $\*\Theta$ which satisfies $\C$ and
     is given by
     \begin{IEEEeqnarray}{rCl}
         \sat(\widehat{\mathbb{P}}) & = &
         \{\*\theta \in \*\Theta: \C(\*\theta, \widehat{\mathbb{P}}) = 1\}.
         \nonumber
     \end{IEEEeqnarray}
     The collection of all possible sets generated by $\sat$ is 
     \begin{IEEEeqnarray}{rCl}
         \sat_{\mathcal{Q}} & = &
         \{
            \sat(\widehat{\mathbb{P}}): \widehat{\mathbb{P}} \in \mathcal{Q}
         \}
         \nonumber
     \end{IEEEeqnarray}
     and the collection of finite intersections from $\sat_{\mathcal{Q}}$ is
     \begin{IEEEeqnarray}{rCl}
         \sat_{\cap} & = & \{ \cap_{i=1}^t A_i: A_i \in \sat_{\mathcal{Q}}, \nonumber \\
         &&                 \quad 1 \leq i \leq t \text{ and } 1\leq t \leq T, \:\: T\in\mathbb{N}\}.
         \nonumber
     \end{IEEEeqnarray}  
     Lastly, for a given sequence $\{\widehat{\mathbb{P}}_t\}_{t=1}^T$ in $\mathcal{Q}$, define
     \begin{IEEEeqnarray}{rCl}
         \sat_t & = & \sat(\widehat{\mathbb{P}}_t) \nonumber \\
         \sat_{1:t} & = & \cap_{i=1}^t \sat_i, 
         \nonumber
     \end{IEEEeqnarray}
     for all $t=1,2,\dots T$.
\end{definition}

\begin{definition}[Optimality]
     A \CL algorithm is optimal with respect to the criterion $\C$ and a set $\mathcal{Q}$ of task distributions if 
     \begin{itemize}
         \item[(i)] for any sequence $\{\widehat{\mathbb{P}}_t\}_{t=1}^T$ in $\mathcal{Q}$, 
     $\C(\*\theta_t, \widehat{\mathbb{P}}_i) = 1$, for all $i=1,2,\dots t$ and all $t=1,2,\dots T$;
        \item[(ii)] it holds that for any fixed $\*\theta', \*\I'$ that
        $\widehat{\A}_{\*\theta}(\*\theta', \*\I', \widehat{\mathbb{P}}) = \widehat{\A}_{\*\theta}(\*\theta', \*\I', \widehat{\mathbb{Q}})$ and
         $\widehat{\A}_{\*\I}(\*\theta', \*\I', \widehat{\mathbb{P}}) = \widehat{\A}_{\*\I}(\*\theta', \*\I', \widehat{\mathbb{Q}})$
         if $\sat(\widehat{\mathbb{Q}}) = \sat(\widehat{\mathbb{P}})$, for all $\widehat{\mathbb{P}}, \widehat{\mathbb{Q}}$ in $\mathcal{Q}$,
     \end{itemize}
     for any $T \in \mathbb{N}$.
     \label{def:optimality}
\end{definition}
\begin{remark}
    Note that we have defined optimality with respect to a possibly restricted subclass $\mathcal{Q} \subseteq \mathcal{P}(\X\times\Y)$ of task distributions.
    While the bulk of the literature on \CL makes no assumptions on the set of task distributions that the algorithm processes, this ensures that our notion of optimality could be made arbitrarily strict.
\end{remark}
\begin{remark}
    In spite of its name, the above definition only imposes weak restrictions on a \CL algorithm to be called optimal: All that it requires is that some arbitrary criterion $\C$ is satisfied on each task.
\end{remark}
\begin{remark}
    While this is notationally suppressed, the criterion $\C$ itself could depend on some hyperparameter.
    For example, suppose that for some loss function $\ell: \*\Theta \times \mathcal{X} \times \mathcal{Y} \to \mathbb{R}$ and some $\varepsilon \geq 0$, the optimality criterion is given by 
    \begin{IEEEeqnarray}{rCl}
        \C(\*\theta, \widehat{\mathbb{P}}) & = &
        \begin{cases}
            1 & \text{if } \int_{\X \times \Y}\ell(\*\theta, \*x, \*y) d\widehat{\mathbb{P}}(\*x, \*y) \leq \varepsilon \\
            0 & \text{otherwise}.
        \end{cases}
        \nonumber
    \end{IEEEeqnarray}
    Now $\C = \C_{\varepsilon}$ depends on $\varepsilon$ so that one can define optimality  for any \textit{fixed} value of $\varepsilon \geq 0$.
    %
    %
    %
\end{remark}

\section{A Convenient Idealization}
\label{sec:idealization}

Throughout, it will be convenient to derive results relative to a version of \CL that is idealized. 
This idealization \textit{directly} has access to the sets $\sat_t$ (rather than to $\widehat{\mathbb{P}}_t$).
In other words, the idealization has access to a convenient oracle: It is already informed of all elements of the hypothesis class $\F_{\*\Theta}$ that satisfy the criterion $\C$ on the $t$-th task.
Importantly and as we will show in Lemma \ref{lemma:OCL=OICL},
studying optimality with the idealized version of \CL instead of the standard version does not impose any assumptions.
As the idealized version of \CL relies on basic set operations, this substantially simplifies the subsequent analysis.
%


%
%
\begin{definition}[Idealized Continual Learning]
     For a hypothesis class $\F_{\*\Theta}$, any $T \in \mathbb{N}$ and any sequence of sets $\{\sat_t\}_{t=1}^T$ 
     generated with some fixed criterion $\C$ and an arbitrary sequence of probability measures $\{\widehat{\mathbb{P}}_t\}_{t=1}^T$ as in Definition \ref{def:optimality},
     Idealized \CL (\ICL) algorithms are specified by functions $\A_{\I}$ and $\A_{\*\theta}$ 
     \begin{IEEEeqnarray}{rCl}
        \A_{\*\I}: \*\Theta \times \Info \times \sat_{\mathcal{Q}} & \to &  \Info \nonumber \\
        \A_{\*\theta}: \*\Theta \times \Info \times \sat_{\mathcal{Q}} & \to &  \*\Theta, \nonumber 
        \nonumber
     \end{IEEEeqnarray}
     where $\Info$ is some space that may vary between different \ICL algorithms. 
     Given $\A_{\*\I}$ and $\A_{\*\theta}$ and some initializations $\*\theta_0$ and $\*\I_0$, \ICL defines a procedure given by 
     \begin{IEEEeqnarray}{rCl}
        \*\theta_1 &=& \A_{\*\theta}(\*\theta_0, \I_0, \sat_1) \nonumber \\
        \I_1 & = & \A_{\*\I}(\*\theta_1, \I_0, \sat_1) \nonumber \\
        \*\theta_2 &=& \A_{\*\theta}(\*\theta_1, \I_1, \sat_2) \nonumber \\    
        \I_2 & = & \A_{\*\I}(\*\theta_2, \I_1, \sat_2) \nonumber \\
        &\dots& \nonumber \\
        \*\theta_T &=& \A_{\*\theta}(\*\theta_{T-1}, \I_{T-1}, \sat_T) \nonumber  \\
        \I_T & = & \A_{\*\I}(\*\theta_{T}, \I_{T-1}, \sat_T). \nonumber 
     \end{IEEEeqnarray}
     \label{def:ICL}
\end{definition}
%

    It should be clear that the only difference between \CL and Idealized \CL is the third argument: Rather than using $\widehat{\mathbb{P}}_t$, Idealized \CL algorithms use $\sat_t$.
    Apart from that, everything else remains the same: $\I_t$ is still interpretable as additional information and  $\widehat{\mathbb{P}}_t$ as an empirical measure composed of samples from the $t$-th task.

%
\begin{definition}[Optimal Idealized Continual Learning]
     An \ICL algorithm is an \OICL procedure if $\*\theta_t \in \sat_{1:t}$ for all $t=1,2,\dots T$.
    \label{def:OICL}
\end{definition}

\begin{figure}[t!]
        \begin{center}
            \centerline{\includegraphics[trim= {0cm 0.0cm 0cm 0.0cm}, clip, 
            width=1.00\columnwidth]{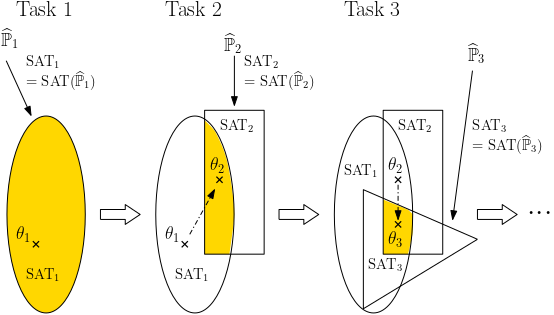}}
            \caption{
            Visualization of optimal Idealized CL: At each task, the new value for $\*\theta_t$ must lie in the intersection $\sat_{1:t} = \cap_{i=1}^t\sat_i$.
            }
            \label{Fig:CL_set_perspective}
        \end{center}
        \vskip -0.2in
\end{figure}

In contrast to an optimal \CL algorithm, an \OICL algorithm has a clear set-theoretic interpretation that is set out in Figure \ref{Fig:CL_set_perspective}: $\*\theta_t$ needs to lie in the intersection of the sets that mark out the subspaces of $\*\Theta$ on which $\C$ is satisfied relative to all $t$ tasks observed thus far. 
Combined with the next result, this will serve to abstract and simplify the further analysis of optimal \CL.

%


%
\begin{lemma}
    Any optimal \CL algorithm is an optimal \ICL algorithm relative to the same criterion.
    \label{lemma:OCL=OICL}
\end{lemma}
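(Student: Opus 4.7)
The plan is to construct an explicit \ICL algorithm from any given optimal \CL algorithm by letting the sets $\sat_t$ act as a proxy for the probability measures $\widehat{\mathbb{P}}_t$, and then to check that this construction inherits optimality. Condition (ii) of Definition \ref{def:optimality} is precisely what makes this proxy step legitimate: it says the update maps $\widehat{\A}_{\*\theta}$ and $\widehat{\A}_{\*\I}$ depend on their last argument only through its $\sat$-image.

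Concretely, given an optimal \CL algorithm $(\widehat{\A}_{\*\theta}, \widehat{\A}_{\*\I})$, I would define an \ICL algorithm by setting, for each $S \in \sat_{\mathcal{Q}}$ and each $(\*\theta', \*\I')$,
\begin{IEEEeqnarray}{rCl}
    \A_{\*\theta}(\*\theta', \*\I', S) & := & \widehat{\A}_{\*\theta}(\*\theta', \*\I', \widehat{\mathbb{P}}), \nonumber \\
    \A_{\*\I}(\*\theta', \*\I', S) & := & \widehat{\A}_{\*\I}(\*\theta', \*\I', \widehat{\mathbb{P}}), \nonumber
\end{IEEEeqnarray}
where $\widehat{\mathbb{P}}$ is any element of $\mathcal{Q}$ with $\sat(\widehat{\mathbb{P}}) = S$. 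Such a $\widehat{\mathbb{P}}$ exists by the definition of $\sat_{\mathcal{Q}}$, and condition (ii) of Definition \ref{def:optimality} guarantees that the resulting value does not depend on which representative $\widehat{\mathbb{P}}$ is chosen, so $\A_{\*\theta}$ and $\A_{\*\I}$ are well-defined functions on $\*\Theta \times \Info \times \sat_{\mathcal{Q}}$.

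Next, I would verify by induction on $t$ that, for any input sequence $\{\widehat{\mathbb{P}}_t\}_{t=1}^T \subset \mathcal{Q}$ fed into $\widehat{\A}$ and the corresponding sequence $\{\sat_t\}_{t=1}^T = \{\sat(\widehat{\mathbb{P}}_t)\}_{t=1}^T$ fed into $\A$ (with the same initializations $\*\theta_0, \*\I_0$), the iterates coincide: $\A_{\*\theta}(\*\theta_{t-1}, \*\I_{t-1}, \sat_t) = \widehat{\A}_{\*\theta}(\*\theta_{t-1}, \*\I_{t-1}, \widehat{\mathbb{P}}_t)$ and likewise for $\A_{\*\I}$. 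This is immediate from the construction. Since $\widehat{\A}$ is optimal, part (i) of Definition \ref{def:optimality} gives $\C(\*\theta_t, \widehat{\mathbb{P}}_i) = 1$ for every $i \leq t$, i.e.\ $\*\theta_t \in \sat_i$ for every $i \leq t$, so $\*\theta_t \in \cap_{i=1}^t \sat_i = \sat_{1:t}$. By Definition \ref{def:OICL}, this makes $\A$ an \OICL algorithm.

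Finally I would note what the \ICL algorithm is being asked to do: it receives exactly the sequences $\{\sat_t\}_{t=1}^T$ that arise as $\sat$-images of sequences in $\mathcal{Q}$, which is the same class of inputs contemplated in Definition \ref{def:ICL}, so there is no mismatch of domains. The only point that requires care is the well-definedness in the construction above — this is the entire content of condition (ii), and it is the step where an alternative formulation of optimality that omitted (ii) would break the argument. Once this is in place the rest is bookkeeping.
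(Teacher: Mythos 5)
Your proposal is correct and follows essentially the same construction as the paper's proof: define the idealized algorithm by pulling back through $\sat$. You are actually more careful than the paper, which writes the definition without explicitly flagging that condition (ii) of Definition \ref{def:optimality} is what makes the map well-defined on $\sat_{\mathcal{Q}}$, and without spelling out the final check that the resulting \ICL algorithm is \OICL; both of these details you supply correctly.
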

\begin{proof}

     Suppose that $\widehat{\A}_{\*\theta}$ and $\widehat{\A}_{\*\I}$ define an optimal \CL algorithm.
     Simply define ${\A}_{\*\theta}(\*\theta_t, \*\I_t, \sat(\widehat{\mathbb{P}}_t)) = \widehat{\A}_{\*\theta}(\*\theta_t, \*\I_t, \widehat{\mathbb{P}}_{t})$ and similarly ${\A}_{\*\I}(\*\theta_t, \*\I_t, \sat(\widehat{\mathbb{P}}_t)) = \widehat{\A}_{\*\I}(\*\theta_t, \*\I_t, \widehat{\mathbb{P}}_t)$ as the functions specifying the corresponding \ICL algorithm.
     %
     By definition, $\sat_t = \sat(\widehat{\mathbb{P}_t})$ so that the reverse is immediate, too.
\end{proof}
%
    %
%
    Lemma \ref{lemma:OCL=OICL} plays a central role throughout the rest of the paper: 
    Based on the stated equivalence, one can use idealized optimal \CL algorithms to analyze standard optimal \CL algorithms.
    This has two advantages: %
    Firstly, it drastically simplifies the analysis by reducing it to basic set theory.
    Secondly, it provides new ways of forming intuitions about the computational properties of optimal \CL algorithms.
    %

%

%
%

%
%

\section{Main Results}
\label{section:main_result}

Next, we summarize the main results: Generally,
\begin{itemize}
    \item[(1)] 
    optimal \CL algorithms are \NPH and
    \item[(2)]
    optimal \CL algorithms require perfect memory.
\end{itemize}
While we sketch the most important proofs, full details and derivations are deferred to the Appendix.
%
To clearly convey the most important insights, we additionally provide examples and illustrations.

Before proceeding, we state another key lemma that is invaluable for both main results.
Its role is to lower bound both the memory requirement and computational hardness of optimal \CL algorithms with that of a well-studied decision problem which is illustrated in Figure \ref{Fig:decision_problem}.

\begin{lemma}
    An optimal \CL algorithm is computationally at least as hard as deciding whether $A \cap B = \emptyset$, for $A \in \sat_{\cap}$ and $B \in \sat_{\mathcal{Q}}$.
    \label{lemma:decision_problem}
\end{lemma}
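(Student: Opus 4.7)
The plan is to give a polynomial-time reduction from the set-intersection decision problem to running an optimal \CL algorithm: if we can do the latter, we can decide ``is $A\cap B = \emptyset$?'' by inspecting a single output. By Lemma \ref{lemma:OCL=OICL}, it suffices to work with the idealized formulation, whose inputs are already elements of $\sat_{\mathcal{Q}}$.

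The reduction proceeds in three steps. First, given an instance $(A, B)$, we unpack $A = \cap_{i=1}^{t}\sat_i$ with each $\sat_i \in \sat_{\mathcal{Q}}$ (which is how elements of $\sat_{\cap}$ are specified by definition) and set $\sat_{t+1} := B$; this produces a length-$(t+1)$ sequence of valid \ICL inputs. Second, we feed this sequence into the \OICL procedure to obtain iterates $\*\theta_1,\dots,\*\theta_{t+1}$. Third, we verify $\*\theta_{t+1}$ by checking that $\C(\*\theta_{t+1}, \widehat{\mathbb{P}}_i) = 1$ for every $i\leq t+1$, where $\widehat{\mathbb{P}}_i \in \mathcal{Q}$ is any measure satisfying $\sat(\widehat{\mathbb{P}}_i)=\sat_i$ (the choice is irrelevant by part~(ii) of Definition~\ref{def:optimality}). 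Output ``non-empty'' if every check passes and ``empty'' otherwise.

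Correctness is then immediate from Definition~\ref{def:OICL}: whenever $A\cap B \neq \emptyset$, the \OICL output must satisfy $\*\theta_{t+1} \in \sat_{1:t+1} = A\cap B$ and the verification succeeds, whereas whenever $A\cap B = \emptyset$ no iterate can lie in the intersection and so the verification must fail. The reduction invokes the \CL update $t+1$ times and performs $t+1$ evaluations of $\C$, so it is polynomial in $t$ relative to an oracle for $\C$.

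The main obstacle is mostly one of stating the assumptions precisely rather than a substantive hurdle: the hardness claim has to be interpreted relative to the cost of evaluating $\C$, since the verification step presupposes that $\C$ can be checked efficiently. For the loss-based criteria that motivate the paper $\C$ is polynomial-time evaluable and the reduction is genuinely polynomial, but this caveat should be made explicit so that ``at least as hard'' is well defined, and it is exactly the condition that will later justify passing from the decision problem to the \NPH claim of Theorem~\ref{thm:NP}.
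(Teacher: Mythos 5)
Your argument is correct and follows essentially the same route as the paper: the paper's appendix splits the reasoning into two propositions — optimal \CL can solve the \OICL optimization problem (by constructing a task sequence $\sat_1,\dots,\sat_{T-1}=B_1,\dots,B_{T-1}$, $\sat_T=A$ and reading off $\*\theta_T$), and solving that optimization problem suffices to solve the decision problem (by testing the returned $\*\theta$ for validity) — which is exactly your construct-then-verify reduction, just packaged as two lemmas rather than one. Your explicit remark that ``at least as hard'' is relative to the cost of evaluating $\C$ (equivalently, membership in $\sat_i$) is a reasonable point the paper leaves implicit, but it does not change the substance of the argument.
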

\begin{proofsketch}
    By virtue of Lemma \ref{lemma:OCL=OICL}, it suffices to show this for the corresponding optimal \ICL algorithm.
    Since $\*\theta_t \in \sat_{1:t}$, optimal \ICL solves a particular optimization problem: In particular, optimal \ICL finds a $\*\theta_t \in A \cap B$, for some $A = \sat_{1:(t-1)} \in \sat_{\cap}$ and $B = \sat_{t} \in \sat_{\mathcal{Q}}$.
    Clearly, finding an element in $A\cap B$ is at least as hard as determining whether $A\cap B = \emptyset$. 
\end{proofsketch}

\begin{figure}[h!]
        \begin{center}
            \centerline{\includegraphics[trim= {0cm 0.0cm 0cm 0.0cm}, clip, 
            width=1.00\columnwidth]{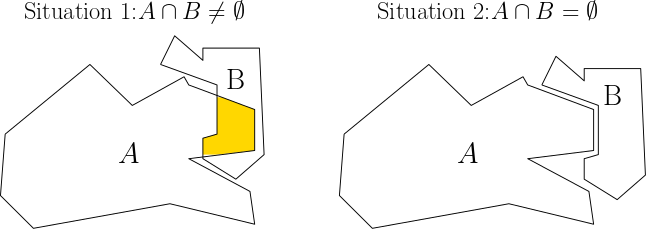}}
            \caption{
            Any \CL algorithm induces the two collections of sets $\sat_{\mathcal{Q}}$ and $\sat_{\cap}$.
            Lemma \ref{lemma:decision_problem} says that if the \CL algorithm is optimal, then it solves a problem at least as hard as deciding if $A \cap B = \emptyset$, for all $A \in \sat_{\cap}$ and $B \in \sat_{\mathcal{Q}}$.
            }
            \label{Fig:decision_problem}
        \end{center}
        \vskip -0.2in
\end{figure}

\subsection{Computational Complexity}
\label{sec:NP}

Finally, we are in a position to formally state our  result on the computational hardness of optimal \CL.

\begin{theorem}
    If $\mathcal{Q}$ and $\C$ are such that $\sat_{\mathcal{Q}} \supseteq S$ or $\sat_{\cap} \supseteq S$
    so that $S$ is the set of tropical hypersurfaces  or the set of polytopes on $\*\Theta$, 
    then optimal \CL is \NPH.
    \label{thm:NP}
\end{theorem}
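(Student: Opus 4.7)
The plan is to reduce optimal CL to a known NP-hard set-intersection problem by chaining Lemma \ref{lemma:OCL=OICL} and Lemma \ref{lemma:decision_problem} with classical NP-hardness results from computational (tropical) geometry. First I would invoke Lemma \ref{lemma:decision_problem} to establish that any optimal \CL algorithm, on a sequence of tasks producing $\sat_i$'s, can be used as an oracle to decide whether $A \cap B = \emptyset$ for any $A \in \sat_{\cap}$ and $B \in \sat_{\mathcal{Q}}$: an optimal algorithm must return a $\*\theta_t$ in the intersection if it exists (and can be made to certify emptiness otherwise), so in polynomial additional time one obtains a decision procedure for this intersection question.

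Next I would use the hypothesis $\sat_{\mathcal{Q}} \supseteq S$ (or $\sat_{\cap} \supseteq S$) to embed an arbitrary instance of the known NP-hard intersection problem over $S$ into the \CL decision problem above. Concretely, given any instance $(A^*,B^*)$ of the intersection emptiness problem for the class $S$, the assumption $\sat_{\mathcal{Q}}\supseteq S$ guarantees that there exist task distributions $\widehat{\mathbb{P}}_A, \widehat{\mathbb{P}}_B \in \mathcal{Q}$ with $\sat(\widehat{\mathbb{P}}_A)=A^*$ and $\sat(\widehat{\mathbb{P}}_B)=B^*$ (and analogously for the $\sat_{\cap}$ version, where $A^*$ arises from a finite prefix of tasks); feeding this sequence into the optimal \CL algorithm thus decides whether $A^* \cap B^* = \emptyset$. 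Hence any NP-hard instance of intersection emptiness over $S$ yields a polynomial-time reduction to optimal \CL.

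It remains to invoke the NP-hardness of the intersection decision problem for each of the two classes. For polytopes, the reduction is from a SAT-style problem: polytopes (including non-convex polytopal unions/projections as typically admitted on a general parameter space $\*\Theta$) can encode the clauses of a 3-SAT instance so that a common point in their intersection corresponds to a satisfying assignment, and this is the classical NP-hardness of polytope intersection / integer programming feasibility. For tropical hypersurfaces, I would cite the result (due to Theobald and collaborators) that deciding non-emptiness of the intersection of a family of tropical hypersurfaces is NP-hard, again via a reduction from 3-SAT using min/max tropical polynomials to encode clauses.

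The main obstacle, and the place where I would spend the most care, is the embedding step: one must verify that the abstract containment $\sat_{\mathcal{Q}}\supseteq S$ (resp.\ $\sat_{\cap}\supseteq S$) is strong enough to realize the full NP-hard instance family inside a single \CL sequence, i.e.\ that the polynomial-size description of an $S$-instance can be translated into polynomial-size task distributions $\widehat{\mathbb{P}}_i\in\mathcal{Q}$ whose $\sat$-images are exactly the required polytopes or tropical hypersurfaces. Once this encoding is in place, polynomial-time reduction composed with Lemma \ref{lemma:decision_problem} immediately yields \NPH-ness of optimal \CL, giving the theorem.
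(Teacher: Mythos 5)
Your proposal is correct and follows essentially the same route as the paper: Lemma~\ref{lemma:OCL=OICL} and Lemma~\ref{lemma:decision_problem} reduce the question to the intersection-emptiness decision problem over $\sat_{\cap}$ and $\sat_{\mathcal{Q}}$, the containment hypothesis embeds $S$-instances into that problem, and the known \NPC{}-ness of intersection emptiness for tropical hypersurfaces (Theobald et al.) and polytopes (the paper's polyhedral references) then yields \NPH{}-ness of the optimization. In the appendix the paper formalizes the last step exactly as you do, via the observation that solving the \OICL{} optimization problem implies solving the corresponding decision problem.

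One small but worthwhile remark: your last paragraph flags a genuine subtlety that the paper leaves implicit. A set-theoretic containment $\sat_{\mathcal{Q}} \supseteq S$ by itself only guarantees that every $S$-set \emph{is} some $\sat(\widehat{\mathbb{P}})$; for an actual Karp reduction one further needs that a polynomial-size description of the $S$-instance can be converted in polynomial time into a polynomial-size $\widehat{\mathbb{P}} \in \mathcal{Q}$ realizing it. The paper's Example~\ref{example:LAD_regression_NPH} demonstrates this constructively for the polytope case (half-space constraints arise directly from data atoms), but the theorem statement and its proof do not spell out this representational requirement in general, so your instinct to ``spend the most care'' there is well placed. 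This is the only point at which your argument goes beyond what the paper explicitly records; otherwise the two proofs coincide.
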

\begin{proofsketch}
    First, we use Lemma \ref{lemma:decision_problem}: Optimal \CL can correctly decide if $A\cap B = \emptyset$, for all $A \in \sat_{\cap}$ and $B \in \sat_{\mathcal{Q}}$. 
    %
    %
    Second, we use established reductions to conclude that this decision problem is \NPC. 
    For the case where $S$ is the set of tropical hypersurfaces, the results in \citet{AlgebraicVarieties} can be used.
    If $S$ is the set of polytopes, the same conclusion is reached by using the results of \citet{polyhedralPaper} and \citet{polyhedralThesis}.
    Third, it then follows by standard arguments that the optimization problem corresponding to an \NPC decision problem is \NPH. 
\end{proofsketch}

    One may wonder how consequential the above result is in practice.
    Specifically, which kind of criterion $\C$ and which kind of model would produce polytopes or tropical hypersurfaces?
    In fact, relatively simple models and optimality criteria suffice to produce such adverse solution sets. 
    We showcase this in the next example: As we shall see, a simple linear model together with an intuitively appealing upper bound on the prediction error as optimality criterion are sufficient to make the corresponding optimal \CL problem \NPH.

\begin{example}
    Take $\F_{\*\Theta}$ to be the collection of linear models with inputs on $\X$ and outputs on $\Y \subset \mathbb{R}$ linked through the coefficient vector $\*\theta \in \*\Theta$.
    Further, let $\mathcal{Q}$ be the collection of  empirical measures 
    \begin{IEEEeqnarray}{rCl}
        \widehat{m}^t(y,x) & = & \frac{1}{n_t}\sum_{i=1}^{n_t}\delta_{(y_i^t,x_i^t)}(y,x)
        \nonumber
    \end{IEEEeqnarray}
    whose $n_t \in \mathbb{N}$ atoms $\{(y_i^t, x_i^t)\}_{i=1}^{n_t}$ represent the $t$-th task.
    %
    %
    Further, define for $\varepsilon \geq 0$ and all $\widehat{\mathbb{P}} \in \mathcal{Q}$ the criterion
    \begin{IEEEeqnarray}{rCl}
        \C(\*\theta, \widehat{\mathbb{P}}) & = &
        \begin{cases}
            1 & \text{if } |y_i^t - \*\theta^Tx_i^t| \leq \varepsilon, \text { for all } i=1,2,\dots n_t \\
            0 & \text{otherwise}.
        \end{cases}
        \nonumber
    \end{IEEEeqnarray}
    %
    %
    Then, it is straightforward to see that
    \begin{IEEEeqnarray}{rCl}
        \sat(\widehat{\mathbb{P}}) & = &
        \big\{
            \*\theta \in \*\Theta: y_i^t - \*\theta^Tx_i^t \leq \varepsilon \text{ and }
             y_i^t - \*\theta^Tx_i^t -\geq
            \varepsilon, \nonumber \\
        && 
            \text{ for all } i=1,2,\dots n_t
        \big\} \nonumber\\
        & = &
        \left(
        \cap_{i=1}^{n_t}\big\{ 
            \*\theta \in \*\Theta: y_i^t - \*\theta^Tx_i^t \leq \varepsilon \big\}
        \right)
        \cap \nonumber \\
       && \left(\cap_{i=1}^{n_t}\big\{ 
            \*\theta \in \*\Theta: y_i^t - \*\theta^Tx_i^t \geq -\varepsilon \big\}
        \right),
        \nonumber
    \end{IEEEeqnarray}
    which is an intersection of $2n_t$ half-spaces in $\mathbb{R}^d$ and thus a polytope.
    %
    Unless we make very strong assumptions about the common structure between the task distributions which generated the atoms, this implies that we could recover \textit{any} given  polytope in $\mathbb{R}^d$ by constructing $\sat(\widehat{\mathbb{P}})$.
    Under these circumstances, the conditions of Theorem \ref{thm:NP} are met: $\sat_{\mathcal{Q}}$ contains all polytopes.
    While the criterion may seem strict, optimal \CL would still suffer the same problem even if we used the alternative criterion
    \begin{IEEEeqnarray}{rCl}
        \C(\*\theta, \widehat{\mathbb{P}}) & = &
        \begin{cases}
            1 & \text{if } \frac{1}{n}\sum_{i=1}^n|y_i^t - \*\theta^Tx_i^t| \leq \varepsilon \\
            0 & \text{otherwise}.
        \end{cases}
        \nonumber
    \end{IEEEeqnarray}
    In this case, $\sat(\widehat{\mathbb{P}})$ would still be a polytope in $\mathbb{R}^d$, albeit made up only of $2d$ intersections of half-spaces.
    By similar reasoning as applied to $\sat_{\mathcal{Q}}$ before, the collection $\sat_{\cap}$ now contains any arbitrary polytope in $\mathbb{R}^d$.
    \begin{figure}[t!]
        \begin{center}
            \centerline{\includegraphics[trim= {0.cm 2.0cm 0.cm 0.cm}, clip, 
            width=1.00\columnwidth]{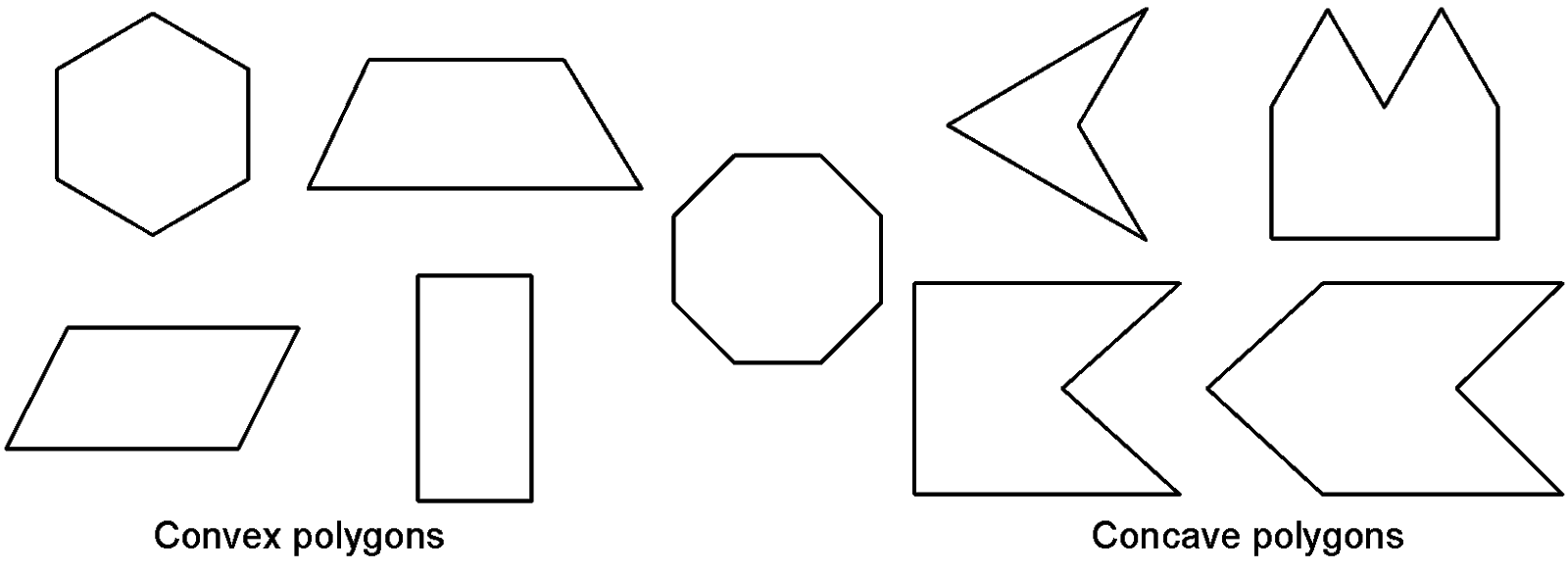}}
            \caption{
            A small selection of polytopes in $\mathbb{R}^2$. Polytopes are basic and well-studied studied geometric shapes.
            }
            \label{Fig:polytopes}
        \end{center}
        \vskip -0.2in
    \end{figure}
    %
    \label{example:LAD_regression_NPH}
\end{example}

Summarizing Example \ref{example:LAD_regression_NPH}, the optimal \CL problem is \NPH even for extremely simple models and even when we restrict the set of permissible data distributions to be almost comically simple.
Indeed, the shapes depicted in Figure \ref{Fig:polytopes} are clearly far simpler than the sets of optimal solutions $\sat_t$ that would be induced by non-linear hypothesis classes $\F_{\*\Theta}$ such as Artificial Neural Networks.
%
%
%
In other words, real world \CL algorithms based on Deep Learning will induce a collection of sets $\sat_{\mathcal{Q}}$ whose elements are at least as hard to intersect as polytopes.
Since even the intersection of the geometrically relatively simple polytopes is \NPH, it is straightforward to show that such \CL algorithms solve an \NPH problem.
The next Corollary formalizes this observation.

\begin{corollary}
    Suppose that $\sat_{\mathcal{Q}} \supseteq S$, with $S$ being a collection of sets for which deciding if $A\cap B = \emptyset$ for $A, B \in S$ is computationally at least as hard as for the collection of polytopes in $\*\Theta$.
    Then optimal \CL is \NPH.
    \label{corollary:NPH}
\end{corollary}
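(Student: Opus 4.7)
The plan is to show that optimal \CL is \NPH by chaining Lemma \ref{lemma:decision_problem} with the polytope-intersection hardness that already underlies the proof of Theorem \ref{thm:NP}. The overall strategy is a short diagram chase: show that the collection $S$ injects into both $\sat_{\mathcal{Q}}$ and $\sat_{\cap}$, transfer hardness from polytope intersection to $S$-intersection using the Corollary's hypothesis, and then invoke Lemma \ref{lemma:decision_problem}.

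First I would verify the basic set-theoretic containment $\sat_{\mathcal{Q}} \subseteq \sat_{\cap}$, which follows directly from the definition of $\sat_{\cap}$ by taking $t = 1$ in the indexing convention (an intersection of one set is the set itself). Combined with the Corollary's hypothesis $\sat_{\mathcal{Q}} \supseteq S$, this gives $\sat_{\cap} \supseteq S$ as well. Consequently, every pair $(A,B) \in S \times S$ is a legitimate instance $(A,B) \in \sat_{\cap} \times \sat_{\mathcal{Q}}$ of the decision problem identified by Lemma \ref{lemma:decision_problem}.

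Next, I would invoke the reductions behind Theorem \ref{thm:NP}: following \citet{polyhedralPaper} and \citet{polyhedralThesis}, deciding whether $A \cap B = \emptyset$ is \NPC when $A, B$ range over polytopes in $\*\Theta$. By the Corollary's hypothesis, the analogous intersection-emptiness problem for $S$ is at least as computationally hard, and hence is itself \NPH. Applying Lemma \ref{lemma:decision_problem} to the instances in $S \subseteq \sat_{\cap} \cap \sat_{\mathcal{Q}}$ then exhibits a polynomial-time reduction from this \NPH problem to optimal \CL, so optimal \CL is \NPH.

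The main obstacle is bookkeeping rather than creativity: one must fix a single consistent notion of reduction (polynomial-time many-one or Turing) across all three links of the chain---polytope intersection to $S$-intersection (Corollary hypothesis), $S$-intersection to the $\sat_{\cap}$-versus-$\sat_{\mathcal{Q}}$ intersection problem (containment), and the latter to optimal \CL (Lemma \ref{lemma:decision_problem})---so that hardness is preserved throughout. Once this choice of reduction is pinned down, no further computation is needed beyond what has already been established for Theorem \ref{thm:NP}.
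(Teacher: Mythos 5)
Your proposal is correct and takes essentially the same approach as the paper: both re-use the machinery of Theorem~\ref{thm:NP} (via Lemma~\ref{lemma:decision_problem}) together with the hypothesis that $S$-intersection is at least as hard as polytope intersection. You spell out a few details the paper leaves implicit, such as the containment $\sat_{\mathcal{Q}} \subseteq \sat_{\cap}$ needed so that $A, B \in S$ are valid arguments to the decision problem, but the route is the same.
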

%
%

\subsection{Memory Requirements}
\label{sec:memory}

Having established the computational hardness of optimal \CL, we next investigate its memory requirements.
To this end, we first need to develop a notion of perfect memory.
Specifically, we will define perfect memory to be the most memory-efficient way of retaining all solutions that have not been ruled out by previously processed tasks.
The first step along this path is the definition of equivalence sets.
Intuitively speaking, equivalence sets are subsets of $\*\Theta$ whose values perform exactly the same across all tasks in $\mathcal{Q}$ (as judged by the criterion $\C$).

\begin{definition}[Equivalence set]
     For $\*\theta \in \*\Theta$, define $S(\*\theta) = \{A \in \sat_{\mathcal{Q}}: 
            \*\theta \in A \}$ and the equivalence sets
     \begin{IEEEeqnarray}{rCl}
         \E(\*\theta)
         & = &
         \bigcap_{
                A \in S(\*\theta)
        }A.
        \nonumber
     \end{IEEEeqnarray}
\end{definition}

\begin{remark}
    Equivalence sets are constructed as illustrated in Figure \ref{Fig:equivalence_sets}. 
    Thus, any set $\E(\*\theta)$ contains all  other solutions $\*\theta' \in \*\Theta$ which are as good as $\*\theta$. 
    To illustrate this logic, suppose $\E(\*\theta) = \{\*\theta\}$.
    In this case, for each value of $\*\theta \in \*\Theta$ there is \textbf{no} other value $\*\theta' \in \*\Theta$ that is guaranteed to perform equally well as $\*\theta$ across all tasks in $\mathcal{Q}$.
    %
\end{remark}

\begin{figure}[b!]
        \begin{center}
            \centerline{\includegraphics[trim= {0cm 0.0cm 0cm 0.0cm}, clip, 
            width=1.00\columnwidth]{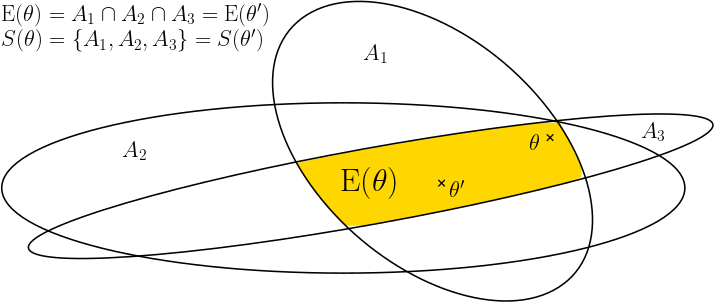}}
            \caption{
            Illustration of Equivalence sets and the results of Lemma \ref{lemma:equivalence_set_properties}: $\E(\*\theta)$ is the intersection of $S(\*\theta)$, the collection of all sets in $\sat_{\mathcal{Q}}$ that contain $\*\theta$.
            }
         \label{Fig:equivalence_sets}
        \end{center}
        \vskip -0.2in
\end{figure}

Equivalence sets satisfy a number of important properties summarized in the following Lemma.

\begin{lemma}
    For arbitrary equivalence sets $\E(\*\theta), \E(\*\theta')$ and arbitrary $A\in\sat_{\mathcal{Q}}$, it holds that
    \begin{itemize}
        \item $\*\theta' \in \E(\*\theta) \Longleftrightarrow \E(\*\theta) = \E(\*\theta')$;
        \item If $\*\theta' \notin \E(\*\theta)$, then $\E(\*\theta) \cap \E(\*\theta') = \emptyset$;
        \item Either $\E(\*\theta) \subseteq A$ or $\E(\*\theta) \cap A = \emptyset$.
    \end{itemize}
    %
    %
    %
    \label{lemma:equivalence_set_properties}
\end{lemma}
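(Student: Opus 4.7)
The plan is to prove the three assertions in the order (iii), (i), (ii), since (iii) supplies a dichotomy that drives the other two. I would begin by noting that $\*\theta \in \E(\*\theta)$ trivially, because $\*\theta$ lies in every $A \in S(\*\theta)$ by definition of $S(\*\theta)$. For (iii), I fix $A \in \sat_{\mathcal{Q}}$ and split on whether $\*\theta \in A$. If $\*\theta \in A$, then $A \in S(\*\theta)$, so $A$ participates in the intersection defining $\E(\*\theta)$ and therefore $\E(\*\theta) \subseteq A$. Otherwise $\*\theta \notin A$, in which case I would argue $\E(\*\theta) \cap A = \emptyset$ by appealing to the interpretation of $\E(\*\theta)$ as capturing points that share the membership pattern of $\*\theta$ across $\sat_{\mathcal{Q}}$, so no point of $\E(\*\theta)$ can lie in a set from $\sat_{\mathcal{Q}}$ that excludes $\*\theta$.

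For (i), the reverse implication is immediate since $\*\theta' \in \E(\*\theta')$ by the initial observation, so $\E(\*\theta) = \E(\*\theta')$ yields $\*\theta' \in \E(\*\theta)$. For the forward implication, I would assume $\*\theta' \in \E(\*\theta)$ and aim to show $S(\*\theta) = S(\*\theta')$, which in turn gives $\E(\*\theta) = \E(\*\theta')$. The inclusion $S(\*\theta) \subseteq S(\*\theta')$ is direct: if $A \in S(\*\theta)$ then $\E(\*\theta) \subseteq A$ by (iii) and $\*\theta' \in \E(\*\theta)$, so $\*\theta' \in A$. The reverse inclusion uses (iii) as a lever: if $A \in S(\*\theta')$ then $\*\theta'$ witnesses $A \cap \E(\*\theta) \neq \emptyset$, which by (iii) forces $\E(\*\theta) \subseteq A$, so in particular $\*\theta \in A$. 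Assertion (ii) then follows by contraposition: any $\*\theta'' \in \E(\*\theta) \cap \E(\*\theta')$ would, via (i) applied to both points, force $\E(\*\theta) = \E(\*\theta'') = \E(\*\theta')$ and therefore $\*\theta' \in \E(\*\theta)$, contradicting $\*\theta' \notin \E(\*\theta)$.

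The main obstacle is the reverse inclusion $S(\*\theta') \subseteq S(\*\theta)$ in the forward direction of (i). The intersection definition of $\E$ gives the inclusion $S(\*\theta) \subseteq S(\*\theta')$ essentially for free, but converting membership of a point into a containment of sets requires upgrading a nonempty-intersection statement into a subset statement, which is precisely the content of (iii). This is why I would prove (iii) as the foundational step and then deploy it as the workhorse for the nontrivial direction of (i), after which (ii) reduces to a clean corollary by contraposition.
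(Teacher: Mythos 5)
Your ordering --- prove (iii) first and let it drive (i) and (ii) --- is a sensible plan, and your derivations of (i) and (ii) \emph{given} (iii) are correct. The problem is that your proof of (iii) is not a proof in the case $\*\theta \notin A$. You say you would conclude $\E(\*\theta) \cap A = \emptyset$ ``by appealing to the interpretation of $\E(\*\theta)$ as capturing points that share the membership pattern of $\*\theta$,'' but that interpretation is precisely what must be established and it does not follow from the definition. By definition $\E(\*\theta) = \bigcap_{B \in S(\*\theta)} B$ collects points that lie in \emph{every} $B \in \sat_{\mathcal{Q}}$ containing $\*\theta$; nothing prevents such a point from \emph{also} lying in some $A \in \sat_{\mathcal{Q}}$ with $\*\theta \notin A$. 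For a concrete failure, take $\*\Theta = \{1,2,3\}$, $\sat_{\mathcal{Q}} = \{\{1,2\},\{2,3\}\}$, and $\*\theta = 1$: then $\E(1) = \{1,2\}$, and with $A = \{2,3\}$ one has $1 \notin A$ yet $\E(1) \cap A = \{2\} \neq \emptyset$, so neither branch of the dichotomy holds. Assertion (i) fails here too: $2 \in \E(1)$ but $\E(2) = \{2\} \neq \E(1)$. So the obstruction is not a missing argument that can be filled in --- the claim, read literally against the paper's definition and for arbitrary $\sat_{\mathcal{Q}}$, is false.

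You should know that the paper's own proof commits the same leap. After correctly observing that $\*\theta' \in \E(\*\theta)$ gives $\*\theta \in A \Rightarrow \*\theta' \in A$ for all $A \in \sat_{\mathcal{Q}}$, it writes ``In other words, $\*\theta' \in A \Longleftrightarrow \*\theta \in A$,'' silently promoting the one-sided implication to a biconditional; the missing converse is exactly what fails in the example above, and it is never argued. So the gap is inherited from the source, not introduced by you. A genuine repair would either change the definition of $\E(\*\theta)$ to something like $\bigl(\bigcap_{B \in S(\*\theta)} B\bigr) \setminus \bigl(\bigcup_{B \in \sat_{\mathcal{Q}} \setminus S(\*\theta)} B\bigr)$ --- which makes the ``same membership pattern'' reading literal, after which (i)--(iii) follow routinely by exactly the kind of argument you sketch --- or impose a structural hypothesis on $\sat_{\mathcal{Q}}$ guaranteeing that whenever $\*\theta' \in \bigcap_{B \in S(\*\theta)} B$, no $A \in \sat_{\mathcal{Q}}$ separates $\*\theta'$ from $\*\theta$. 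As it stands, your write-up of (iii) restates the desired conclusion rather than deriving it.
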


Next, we formally define perfect memory in the context of \CL algorithms.
In particular, we will say that a \CL algorithm has perfect memory if it can
reconstruct at least one element of each equivalence set whose elements satisfy the optimality criterion $\C$ on all tasks observed thus far.
To this end, we define Minimal Covers and Minimal Representations.
Figure \ref{Fig:minimal_cover_and_representation} illustrates both concepts.

\begin{definition}[Minimal cover]
    Given an indexed set $\{\*\theta_i\}_{i\in I}$ of points in $\*\Theta$, suppose that $\{\E(\*\theta_i)\}_{i\in I}$ forms a cover of $\cup_{A \in \sat_{\mathcal{Q}}}A$ such that for $i\neq j$, $\E(\*\theta_i)\cap\E(\*\theta_j)=\emptyset$. 
    Then we call such a cover minimal. 
\end{definition}

\begin{remark}
    Note that by virtue of Lemma \ref{lemma:equivalence_set_properties}, any Minimal cover forms a non-overlapping and unique partition of $\cup_{A \in \sat_{\mathcal{Q}}}A$.
    Figure \ref{Fig:minimal_cover_and_representation} illustrates this point.
\end{remark}

\begin{definition}[Minimal representation]
    Let $\{\E(\*\theta_i)\}_{i\in I}$ be a minimal cover.
    Denoting by $f:I\to\*\Theta$ a function such that $f(i) \in \E(\*\theta_i)$ for all $i\in I$, we call the set $\{f(i)\}_{i\in I}$ a minimal representation of $\cup_{A \in \sat_{\mathcal{Q}}}A$.
    %
\end{definition}
%
    In the context of a \CL algorithm, the minimal representation is the smallest possible set in $\*\Theta$ that one needs to to retain all potential solutions of different quality (as judged by the criterion $\C$).
    In fact, it is instructive to think of a minimal representation as the most memory-efficient representation of the set of all potential solutions: Since all points in an equivalence set are equally good under $\C$ by definition, one can store a single point for each equivalence class $\E(\*\theta) \subset \*\Theta$ without losing information. 
    In other words, a minimal representation is the most memory-efficient way of retaining perfect memory.
    %

\begin{figure}[b!]
        \begin{center}
            \centerline{\includegraphics[trim= {0cm 0.0cm 0cm 0.0cm}, clip, 
            width=1.00\columnwidth]{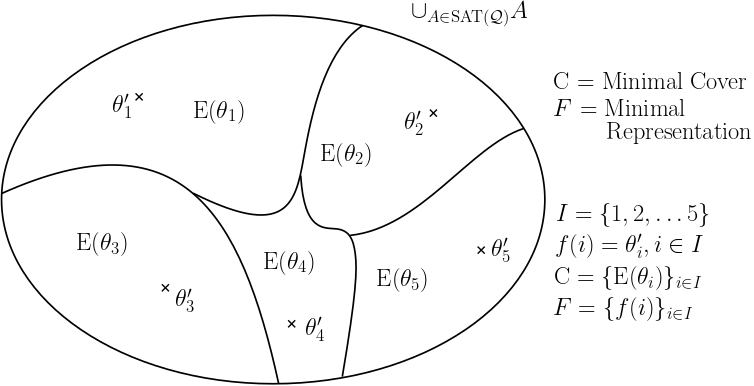}}
            \caption{
            Illustration of Minimal Covers and Representations. Unlike Minimal Covers, Minimal Representations are \textit{not} unique: Both $f(i) = \*\theta_i$ and $f(i) = \*\theta_i'$ yield Minimal Representations.
            }
            \label{Fig:minimal_cover_and_representation}
        \end{center}
        \vskip -0.3in
\end{figure}


\begin{definition}[Perfect memory]
     We say that an optimal \CL algorithm has perfect memory if there exists a function $h:\*\Theta \times \Info \to 2^{\*\Theta}$ for which $h(\*\theta_t, \I_t) = C_t$ such that $\sat_{1:t} \supseteq C_t \supseteq \left(\cup_{i\in I}f(i)\right)\cap\sat_{1:t}$ at task $(t+1)$, for some fixed but arbitrary minimal representation $\{f(i)\}_{i\in I}$ and for any arbitrary $\sat_{1:t} \in \sat_{\cap}$.
     \label{definition:perfect_memory}
\end{definition}
\begin{remark}
    The above conceptualizes an intuitive notion of perfect memory: The set $\left(\cup_{i\in I}f(i)\right)\cap\sat_{1:t}$ contains exactly one value for each equivalence set whose solutions are still optimal given the first $t$ tasks. 
    Thus, an optimal \CL algorithm has perfect memory if it can reconstruct at least one value for each equivalence set that has not been ruled out as sub-optimal by the $t$ preceding tasks.
    Equivalently, one could say that an optimal \CL algorithm can perfectly memorize all equivalence sets ruled out by the first $t$ tasks.
\end{remark}


We are now almost in a position to show that optimal \CL algorithms will generally have perfect memory.
The last missing ingredient is the following lemma.

\begin{lemma}
    If a \CL algorithm is optimal, there exists $h: \*\Theta \times \Info \to 2^{\*\Theta}$ for which $h(\*\theta_t, \I_t) = C_t$ is such that
    $C_t \cap A = \emptyset \Longleftrightarrow \sat_{1:t} \cap A = \emptyset$, for all $A \in \sat_{\mathcal{Q}}$.
    \label{lemma:memory_lemma}
\end{lemma}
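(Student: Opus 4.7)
The plan is to construct $C_t$ by ``probing'' the optimal algorithm one step into the future at every candidate next task and collecting each genuine witness the algorithm returns. I would first apply Lemma~\ref{lemma:OCL=OICL} to reduce to the corresponding optimal \ICL algorithm $\A_{\*\theta}, \A_{\*\I}$, which accepts elements of $\sat_{\mathcal{Q}}$ directly as inputs. This reduction matters because part~(ii) of Definition~\ref{def:optimality} makes the algorithm's output a function only of $(\*\theta_t,\*\I_t)$ and the satisfaction set, so probing with any $A\in\sat_{\mathcal{Q}}$ is unambiguous regardless of which $\widehat{\mathbb{P}}$ realizes $A$.

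Given the state $(\*\theta_t, \*\I_t)$ reached by some sequence in $\mathcal{Q}^t$, I would then define
\[
    h(\*\theta_t, \*\I_t) \;:=\; C_t \;=\; \bigl\{ \A_{\*\theta}(\*\theta_t, \*\I_t, A) \;:\; A \in \sat_{\mathcal{Q}},\; \A_{\*\theta}(\*\theta_t, \*\I_t, A) \in A \bigr\}.
\]
For the forward direction, I would assume $\sat_{1:t}\cap A \neq \emptyset$ and consider the extended sequence in $\mathcal{Q}^{t+1}$ whose $(t+1)$-th satisfaction set is $A$. Definition~\ref{def:optimality}(i) applied to this extension forces $\*\theta_{t+1}:=\A_{\*\theta}(\*\theta_t,\*\I_t,A)\in\sat_{1:(t+1)}=\sat_{1:t}\cap A\subseteq A$, so by construction $\*\theta_{t+1}\in C_t\cap A$.

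For the backward direction, I would pick any $\*\theta^*\in C_t\cap A$ and extract from its membership in $C_t$ a set $B\in\sat_{\mathcal{Q}}$ with $\*\theta^*=\A_{\*\theta}(\*\theta_t,\*\I_t,B)\in B$. Applying optimality to the sequence extended by a $(t+1)$-th task with satisfaction set $B$ then pins $\*\theta^*\in\sat_{1:(t+1)}\subseteq\sat_{1:t}$; combined with $\*\theta^*\in A$, this gives $\sat_{1:t}\cap A\neq\emptyset$.

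The main obstacle is justifying the use of Definition~\ref{def:optimality}(i) in the backward step: I must rule out a ``false witness'', i.e.\ a value $\*\theta^*=\A_{\*\theta}(\*\theta_t,\*\I_t,B)$ sitting in $B$ but outside $\sat_{1:t}$. The argument should be contrapositive: any such output would produce an extended sequence in $\mathcal{Q}^{t+1}$ on which the algorithm fails $\C$ on at least one of the first $t$ tasks, contradicting the assumed optimality on all of $\mathcal{Q}$. Keeping this contrapositive clean, and verifying that reachability of $(\*\theta_t,\*\I_t)$ from $\mathcal{Q}^t$ makes every extension by $B\in\sat_{\mathcal{Q}}$ a valid target for Definition~\ref{def:optimality}(i), is the delicate point that the proof must handle carefully.
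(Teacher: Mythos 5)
Your proof is correct and its core mechanism — exploiting Definition~\ref{def:optimality}(i) on the extension of the reaching sequence by one more task, so that the optimal algorithm's own next-step output is forced into $\sat_{1:t}\cap A$ — is the same mechanism the paper relies on. The difference is that the paper argues abstractly: it invokes Lemma~\ref{lemma:decision_problem} to say the algorithm can solve the \OICL decision problem, asserts that ``it is clear'' a function $g(\*\theta_t,\*\I_t,A)$ with the oracle property must exist, and then claims $g$ is constant in its third argument. You instead give an explicit construction,
\[
C_t \;=\; \bigl\{\A_{\*\theta}(\*\theta_t,\*\I_t,A): A\in\sat_{\mathcal{Q}},\ \A_{\*\theta}(\*\theta_t,\*\I_t,A)\in A\bigr\},
\]
which is manifestly a function of $(\*\theta_t,\*\I_t)$ alone (the union over $A$ is baked into the definition, so no separate ``constant in $A$'' step is needed), and then verify both directions of the biconditional directly from Definition~\ref{def:optimality}. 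This buys you rigor: the paper's existence claim is an assertion, while yours is a verifiable construction, and it makes transparent exactly where part~(ii) of the optimality definition is used (so that probing with $A\in\sat_{\mathcal{Q}}$ is unambiguous across different realizing $\widehat{\mathbb{P}}$'s).

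One small observation worth making precise: the ``delicate point'' you flag — ruling out a false witness $\*\theta^*\in B\setminus\sat_{1:t}$ — is handled automatically, because Definition~\ref{def:optimality}(i) quantifies over \emph{every} sequence in $\mathcal{Q}$, including the extension by a task realizing $B$, and so it directly pins $\*\theta^*\in\sat_{1:t}\cap B$. (Indeed, that same quantification implies $\sat_{1:t}\cap B\neq\emptyset$ for every $B\in\sat_{\mathcal{Q}}$ whenever an optimal algorithm exists, which means your filter ``$\A_{\*\theta}(\cdot)\in A$'' never actually excludes anything; this is a degeneracy in the paper's optimality definition rather than a gap in your argument.) So the contrapositive framing you worried about is not needed: the forward application of (i) already does the work.
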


With this, all that is left to do is proving that $C_t$ of Lemma \ref{lemma:memory_lemma} is contained by $\sat_{1:t}$ and contains  $\left(\cup_{i\in I}f(i)\right)\cap\sat_{1:t}$, for some Minimal Representation $\{f(i)\}_{i\in I}$.
Under mild regularity conditions, this yields the second main result.

\begin{theorem}
    Suppose that for an optimal \CL algorithm, $C_t \subseteq \sat_{1:t}$, $C_t \subseteq C_{t-1}$ and that for all $\*\theta \in \*\Theta$ there exists $\{A_t\}_{t=1}^T$ in $\sat_{\mathcal{Q}}$ such that $\cap_{t=1}^TA_t = \E(\*\theta)$. Then this optimal \CL algorithm has perfect memory.
    \label{thm:optimal_CL_perfect_memory}
\end{theorem}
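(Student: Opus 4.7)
The plan is to invoke Lemma \ref{lemma:memory_lemma} to obtain $h$ with $C_t = h(\*\theta_t, \I_t)$, and then combine the three new hypotheses of this theorem to sandwich $C_t$ between $\sat_{1:t}$ and a minimal representation intersected with $\sat_{1:t}$. The upper inclusion $C_t \subseteq \sat_{1:t}$ is a direct hypothesis, so the real work lies in exhibiting a minimal representation $\{f(i)\}_{i\in I}$ for which $f(i) \in C_t$ whenever $f(i) \in \sat_{1:t}$. To this end I first fix any minimal cover $\{\E(\*\theta_i)\}_{i\in I}$; Lemma \ref{lemma:equivalence_set_properties} ensures this is a partition of $\cup_{A\in\sat_{\mathcal{Q}}} A$, so the choice is well-defined.

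The central step is to prove $C_t \cap \E(\*\theta_i) \neq \emptyset$ for every $i$ with $\E(\*\theta_i) \subseteq \sat_{1:t}$ (the only alternative being $\E(\*\theta_i) \cap \sat_{1:t} = \emptyset$, again by Lemma \ref{lemma:equivalence_set_properties}). The key trick is to extend the first $t$ tasks by the decomposition $\{A_s\}_{s=1}^{T_i} \subseteq \sat_{\mathcal{Q}}$ guaranteed by the hypothesis with $\cap_s A_s = \E(\*\theta_i)$. The extended intersection is $\sat_{1:(t+T_i)} = \sat_{1:t} \cap \E(\*\theta_i) = \E(\*\theta_i)$. The upper-bound hypothesis then forces $C_{t+T_i} \subseteq \E(\*\theta_i)$, and the monotonicity hypothesis forces $C_{t+T_i} \subseteq C_t$, so $C_{t+T_i} \subseteq C_t \cap \E(\*\theta_i)$. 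Applying Lemma \ref{lemma:memory_lemma} at step $t+T_i$ with $A = A_1$, for which $A_1 \cap \sat_{1:(t+T_i)} = \E(\*\theta_i) \ni \*\theta_i$ is nonempty, gives $C_{t+T_i} \cap A_1 \neq \emptyset$. Hence $C_{t+T_i}$ itself is nonempty and therefore so is $C_t \cap \E(\*\theta_i)$.

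With non-emptyness in hand I construct a minimal representation by choosing, for each alive $i$, some $f(i) \in C_t \cap \E(\*\theta_i)$, and for each dead $i$ any $f(i) \in \E(\*\theta_i)$. Dead representatives drop out of $(\cup_i f(i)) \cap \sat_{1:t}$ because Lemma \ref{lemma:equivalence_set_properties} guarantees $\E(\*\theta_i) \cap \sat_{1:t} = \emptyset$ in that case, while alive representatives lie in $C_t$ by design; this yields the required lower inclusion $(\cup_i f(i)) \cap \sat_{1:t} \subseteq C_t$.

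The step I expect to demand the most care is reconciling the phrase \emph{fixed but arbitrary} in Definition \ref{definition:perfect_memory} with the fact that the construction above picks $f(i)$ only once $\sat_{1:t}$ is revealed. To promote this to a single representation serving every $\sat_{1:t} \in \sat_{\cap}$ at once, I would seed each $f(i)$ from the algorithm's output on the canonical decomposing sequence $A_1,\ldots,A_{T_i}$ (so that $\sat_{1:T_i} = \E(\*\theta_i)$ and $C_{T_i}$ is a nonempty subset of $\E(\*\theta_i)$ by the same argument as above), then transfer membership to any $\sat_{1:t'} \supseteq \E(\*\theta_i)$ by concatenating with this canonical suffix and using monotonicity $C_{t'+T_i} \subseteq C_{t'}$. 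This is precisely where the monotonicity hypothesis earns its keep, and it is also where a fully rigorous treatment needs to track how the reconstruction $h$ depends on the algorithm's trajectory rather than merely on the set $\sat_{1:t}$.
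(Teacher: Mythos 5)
Your core argument --- extend the trajectory by a finite decomposition of $\E(\*\theta_i)$ into elements of $\sat_{\mathcal{Q}}$, use $C_{t'} \subseteq \sat_{1:t'}$ together with $C_{t'} \subseteq C_{t'-1}$ to force $C_{t+T_i} \subseteq C_t \cap \E(\*\theta_i)$, then invoke Lemma~\ref{lemma:memory_lemma} on the extended trajectory to obtain non-emptiness --- is exactly the mechanism the paper's Appendix proof deploys in its ``necessity'' part. The only cosmetic difference is that the paper phrases it as irreducibility of the candidate $\widetilde{C}_t = F \cap \sat_{1:t}$ (removing a point $\*\theta$ and then processing the decomposing sequence of $\E(\*\theta)$ breaks the oracle property at a later stage), whereas you prove the logically equivalent statement $C_t \cap \E(\*\theta_i) \neq \emptyset$ for every alive $\E(\*\theta_i)$ directly. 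Your phrasing is, if anything, cleaner, and you correctly identify that the dichotomy of Lemma~\ref{lemma:equivalence_set_properties} is what makes the minimal cover well-defined and dead representatives harmless.

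The subtlety you flag in your last paragraph is genuine and is \emph{not} explicitly discharged in the paper either. As written, the paper's proof establishes that $\widetilde{C}_t$ is a Decision Problem Oracle set and that it is irreducible, but never makes the inference to $C_t \supseteq F \cap \sat_{1:t}$ for a \emph{single} $F$ serving every $\sat_{1:t}\in\sat_\cap$ fully explicit. Your proposed fix --- seed $f(i)$ from a run on the canonical decomposing sequence $A_1,\dots,A_{T_i}$, then transfer membership via concatenation and monotonicity --- would work outright if $C_t$ were a function of the set $\sat_{1:t}$ alone, since then $C_{t'+T_i}$ on the concatenated trajectory would coincide with $C_{T_i}$ on the canonical one and so pin $f(i) \in C_{t'}$. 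But optimality condition (ii) in Definition~\ref{def:optimality} only guarantees stage-wise invariance to swapping $\widehat{\mathbb{P}}_t$ for another measure with the same $\sat_t$; it does not assert that the cumulative state $(\*\theta_t, \I_t)$ (hence $C_t$) depends only on the intersection $\sat_{1:t}$ and not on the order or grouping of the tasks. So what you get unconditionally is $C_{t'} \cap \E(\*\theta_i) \neq \emptyset$ (a representative exists for each trajectory, but possibly a different one). The gap you point out therefore survives in both your argument and the paper's; the quantifier ``for some fixed but arbitrary minimal representation ... for any arbitrary $\sat_{1:t}$'' in Definition~\ref{definition:perfect_memory} is being read implicitly as if $C_t$ were trajectory-independent. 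Your instinct to track how $h$ depends on the trajectory is exactly the right place to press.
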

\begin{proofsketch}
    By Lemma \ref{lemma:memory_lemma}, we know that $C_t$ will suffice to solve the decision problem already discussed in the proof sketch of Theorem \ref{thm:NP}.
    We also know that any optimal \CL algorithm has to be able to solve this decision problem. 
    Thus, the memory requirements of the decision problem lower bound those of optimal \CL algorithms.
    Next, we prove that there must exist a minimal representation $\{f(i)\}_{i\in I}$ for which $C_t \supseteq (\cup_{i\in I}\{f(i)\}) \cap \sat_{1:t}$, for all $t = 1,2,\dots T$.
    We do this by combining Lemma \ref{lemma:equivalence_set_properties} with the additional conditions imposed upon $C_t$.
\end{proofsketch}

The conditions imposed in Theorem \ref{thm:optimal_CL_perfect_memory} are general, but also rather abstract.
Alternatively, much stronger conditions with a more straightforward interpretation could be imposed to derive the same result.
%

%
\begin{corollary}
    Suppose $\C$ and $\mathcal{Q}$ are such that $\E(\*\theta) \in \sat_{\mathcal{Q}}$ for all $\*\theta \in \*\Theta$. Then any optimal \CL algorithm has perfect memory.
    \label{corollary:optimal_CL_perfect_memory}
\end{corollary}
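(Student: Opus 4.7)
The plan is to establish perfect memory by directly constructing the function $h$ of Definition \ref{definition:perfect_memory}, bypassing the more involved intersection argument of Theorem \ref{thm:optimal_CL_perfect_memory}. The hypothesis $\E(\*\theta) \in \sat_{\mathcal{Q}}$ is strong enough that each equivalence class can itself be tested by the decision procedure of Lemma \ref{lemma:memory_lemma}, which provides an immediate shortcut.

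Concretely, I would proceed in three steps. First, fix any minimal representation $\{f(i)\}_{i \in I}$ of $\cup_{A \in \sat_{\mathcal{Q}}} A$, which exists because by Lemma \ref{lemma:equivalence_set_properties} the equivalence sets partition this union into disjoint cells, from which one picks a single representative per cell. Second, invoke Lemma \ref{lemma:memory_lemma} to obtain a function $h_0$ whose output $C_t^{(0)} := h_0(\*\theta_t, \I_t)$ satisfies $C_t^{(0)} \cap A = \emptyset \Longleftrightarrow \sat_{1:t} \cap A = \emptyset$ for every $A \in \sat_{\mathcal{Q}}$. Third, define the candidate $h(\*\theta_t, \I_t) := \{ f(i) : C_t^{(0)} \cap \E(\*\theta_i) \neq \emptyset \}$; this is well-defined as a function of $(\*\theta_t, \I_t)$ alone because the corollary's hypothesis places each $\E(\*\theta_i)$ in $\sat_{\mathcal{Q}}$, so the membership test inside the set-builder is a direct application of $h_0$.

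To verify the perfect memory property for the resulting $C_t := h(\*\theta_t, \I_t)$, I would combine the dichotomy of Lemma \ref{lemma:equivalence_set_properties}---namely that either $\E(\*\theta_i) \subseteq A$ or $\E(\*\theta_i) \cap A = \emptyset$ for every $A \in \sat_{\mathcal{Q}}$---with the fact that $\sat_{1:t}$ is a finite intersection of sets in $\sat_{\mathcal{Q}}$. This yields the chain of equivalences $\sat_{1:t} \cap \E(\*\theta_i) \neq \emptyset \Longleftrightarrow \E(\*\theta_i) \subseteq \sat_{1:t} \Longleftrightarrow f(i) \in \sat_{1:t}$, and substituting back through Lemma \ref{lemma:memory_lemma} gives $C_t = \left(\cup_{i \in I} \{f(i)\}\right) \cap \sat_{1:t}$. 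Both inclusions required by Definition \ref{definition:perfect_memory} then hold as equalities.

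I expect the main obstacle to be conceptual rather than technical: arguing carefully that $h$ depends only on $(\*\theta_t, \I_t)$ and not implicitly on knowledge of $\sat_{1:t}$ itself. This is precisely where the strengthened hypothesis $\E(\*\theta) \in \sat_{\mathcal{Q}}$ is essential---without it, equivalence sets would only be expressible as finite intersections of elements of $\sat_{\mathcal{Q}}$, forcing the more iterative construction underlying Theorem \ref{thm:optimal_CL_perfect_memory}. Under the corollary's hypothesis the reduction collapses to a single application of Lemma \ref{lemma:memory_lemma} per equivalence class, and the remainder of the argument is set-theoretic bookkeeping via Lemma \ref{lemma:equivalence_set_properties}.
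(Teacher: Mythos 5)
Your proposal is correct, and it follows the same conceptual route as the paper's own proof (which also combines the Decision Problem Oracle set of Lemma~\ref{lemma:memory_lemma} with the identifiability hypothesis $\E(\*\theta)\in\sat_{\mathcal{Q}}$ and the dichotomy of Lemma~\ref{lemma:equivalence_set_properties}). The difference is one of care rather than strategy, and it is worth noting. The paper asserts tersely that ``since one can always select $A=\E(\*\theta)$, $\sat_{1:t}\supseteq C_t\supseteq\sat_{1:t}\cap F$ readily follows,'' treating $C_t$ as the oracle set handed to us by Lemma~\ref{lemma:memory_lemma}. But that argument only shows $C_t\subseteq\sat_{1:t}$ and that $C_t$ meets every equivalence set meeting $\sat_{1:t}$; it does not show that $C_t$ contains the designated representatives $f(i)$ of a single, fixed minimal representation, which is what Definition~\ref{definition:perfect_memory} requires (and the paper's parenthetical claim that $C_t=\sat_{1:t}$ is a slip---it would force all equivalence sets to be singletons). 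Your construction dissolves this issue: rather than reasoning about the arbitrary $C_t^{(0)}=h_0(\*\theta_t,\I_t)$ directly, you post-process it into a new $h(\*\theta_t,\I_t)=\{f(i):C_t^{(0)}\cap\E(\*\theta_i)\neq\emptyset\}$, and your chain of equivalences (the ``all-or-nothing'' intersection of $\E(\*\theta_i)$ with each $\sat_j$, hence with $\sat_{1:t}$) shows this equals $F\cap\sat_{1:t}$ exactly, for whichever fixed $F$ was chosen up front. This makes the dependence on only $(\*\theta_t,\I_t)$ explicit and yields both required inclusions with equality, which the paper leaves implicit. In short: same idea, but your version makes the existence of the $h$ demanded by Definition~\ref{definition:perfect_memory} airtight where the paper hand-waves.
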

Even though the conditions of Corollary \ref{corollary:optimal_CL_perfect_memory} are far more restrictive than those of Theorem \ref{thm:optimal_CL_perfect_memory},  it is relatively easy to find  examples on which they hold.
\begin{example}
    To keep things simple, consider again the set-up of Example \ref{example:LAD_regression_NPH}.
    %
    %
    Consider
    \begin{IEEEeqnarray}{rCl}
        \varepsilon_t  & = &
        \min_{\*\theta}\frac{1}{n}\sum_{i=1}^n|y_i^t - \*\theta^T x_i^t|,
        \nonumber \\
        \*\theta^{\ast}_t & = &
        \arg\min_{\*\theta}\frac{1}{n}\sum_{i=1}^n|y_i^t - \*\theta^T x_i^t|.
        \nonumber
    \end{IEEEeqnarray}
    %
    %
    Unless we substantially restrict the permitted task distributions in $\mathcal{Q}$, we cannot exclude the possibility that $\E(\*\theta) = \{\*\theta\} \in \sat_{\mathcal{Q}}$ for all $\*\theta \in \*\Theta$.
    Specifically, this is the case if for a fixed $\varepsilon \geq 0$ and for {any} $\*\theta \in \cup_{A \in \sat_{\mathcal{Q}}}A$, it is possible to find an empirical measure in $\mathcal{Q}$ constructed with atoms $\{(y_i^t, x_i^t)\}_{i=1}^{n_t}$ for which $\*\theta = \*\theta^{\ast}_t$ so that
    \begin{IEEEeqnarray}{rCCCCCCCl}
        \frac{1}{n}\sum_{i=1}^n|y_i^t - \*\theta^T x_i^t|
        & = &
        \varepsilon & = &
        \varepsilon_t & = &
        \frac{1}{n}\sum_{i=1}^n|y_i^t - (\*\theta_t^{\ast})^T x_i^t|.
        \nonumber
    \end{IEEEeqnarray}
    For the corresponding empirical distribution $\widehat{\mathbb{P}}_t$, it would then follow that $\sat(\widehat{\mathbb{P}}_t) = \{\*\theta\}$.
    Since $\*\theta$ was chosen arbitrarily, this immediately entails that
    \begin{IEEEeqnarray}{rCl}
        \E(\*\theta) & = & \{\*\theta\} \in  \sat_{\mathcal{Q}}
        \nonumber
    \end{IEEEeqnarray}
    for all $\*\theta \in \cup_{A \in \sat_{\mathcal{Q}}}A$ so that the conditions of Corollary \ref{corollary:optimal_CL_perfect_memory} are satisfied.
    Notice that one could apply the same logic with most other predictors $f_{\*\theta}$ by replacing $\*\theta^Tx_i^t$ with $f_{\*\theta}(x_i^t)$ in the definition of $\C(\*\theta, \widehat{\mathbb{P}})$ of Example \ref{example:LAD_regression_NPH}.
\end{example}
The take-away message from the previous example is that even though the conditions of Theorem \ref{thm:optimal_CL_perfect_memory} (or Corollary \ref{corollary:optimal_CL_perfect_memory}) will be harder to verify for more complicated model classes, they should be expected to hold in practice unless $\mathcal{Q}$ is substantially restricted and $\C$ is picked very carefully.

\section{Implications for \CL  in the Wild}
\label{sec:consequences_for_CL_design}

Our results are of theoretical interest, but also have two practical implications: 
Firstly, they illuminate that \CL algorithms should be seen as  polynomial time heuristics targeted at solving an \NPH problem.
This new perspective explains why the design of reliable \CL algorithms has proven a persistent challenge.
Secondly, our results provide a theoretically grounded confirmation of recent benchmarking results, which found that \CL algorithms based on experience replay, core sets and episodic memory were more reliable than regularization-based alternatives.
In the remainder of this section, we elaborate on both of these points.

\subsection{\CL as Polynomial Time Heuristics}

As we have shown, \CL algorithms that avoid catastrophic forgetting as judged by an optimality criterion $\mathcal{C}$ generally solve \NPH problems.
Consequently, the polynomial time heuristics that have been proposed to (sub-optimally) tackle the \CL problem in practice can be seen as heuristic algorithms {without} performance guarantees. 
As these heuristics are not coupled to explicit assumptions on the data generating mechanisms underlying the tasks, it is easy to see why reliable \CL algorithms have proven to be a persistent challenge.
Since many well-known \NPH problems admit heuristic polynomial time approximation algorithms \textit{with} performance guarantees, this also raises the question whether one could derive such algorithms for \CL.
So far however, this has not been attempted. 
Instead, the literature has focused on two useful heuristics for the design of \CL algorithms: Memorization and regularization approaches.

\subsection{Memorization versus Regularization}

In the recent large-scale comparative study of \citet{threeScenarios}, replay- and memorization-based \CL heuristics were found to produce far more reliable results than regularization-based approaches.
Similarly, \citet{VariationalCL} found that a variant of their (approximately Bayesian) algorithm which used core sets to represent previously seen tasks produced substantially improved results over the version without core sets.
In the same vein, \citet{GalUnification} found that using generative approaches to complement approximately Bayesian procedures improved performance.
Most recently, \citet{TitsiasCL} outperformed competing approaches using inducing points within the Gaussian Process framework as efficient summaries of previous observations.

In fact, these empirical finding are to be expected given the perfect memory requirement of optimal \CL: Approaches based on replay and core sets amount to storing an approximate representation of previous tasks.
In other words, these \CL algorithms store information $\I_t$ such that one can reconstruct an approximation $\mathbb{Q}_{1:t} \in \mathcal{P}(\X\times\Y)$ for the joint distribution over all tasks observed thus far.
In this sense, it is instructive to think of them as processing a single (albeit consecutively modified) task $\mathbb{Q}_{1:t}$.
Intuitively then, \CL algorithms of this kind will  perform well under two conditions: It must be relatively easy to find some element $\*\theta \in A$ for a \textit{single} $A \in \sat_{\mathcal{Q}}$ and it must hold that $\sat(\mathbb{Q}_{1:t}) \approx \sat_{1:t}$.
The next Example and Figure \ref{Fig:PracticalImplications} expand on this point.

\begin{example}
    Keeping things simple, we construct $\mathcal{Q}$ and $\C$ to ensure that $\sat_{\mathcal{Q}}$ consists only of spheres. 
    Suppose that $\*\theta \in \*\Theta = \mathbb{R}^2$ represents the (two-dimensional) mean across all tasks and that $\mathcal{Q}$ consists of empirical measures.
    Further, suppose that $\X = \emptyset$ (so that there only is an output variable $\*Y_t$) and that the criterion of interest is an upper bound of $\varepsilon$ on the average mean squared Euclidean distance, i.e. 
    \begin{IEEEeqnarray}{rCl}
        \C(\*\theta, \widehat{\mathbb{P}}) & = &
        \begin{cases}
            1 & \text{if } \frac{1}{n}\sum_{i=1}^n\|y_i^t - \*\theta\|_2^2 \leq \varepsilon \\
            0 & \text{otherwise}.
        \end{cases}
        \nonumber
    \end{IEEEeqnarray}
    For a single task it is easy to find a value of $\*\theta$ satisfying $\C$ by using simple linear regression (provided that $\varepsilon$ is chosen large enough).
    Figure \ref{Fig:PracticalImplications} illustrates why \CL based on core sets, replay or memory can typically be expected to work relatively well.
\end{example}

\begin{figure}[t!]
        \begin{center}
            \centerline{\includegraphics[trim= {0cm 0.0cm 0cm 0.0cm}, clip, 
            width=1.00\columnwidth]{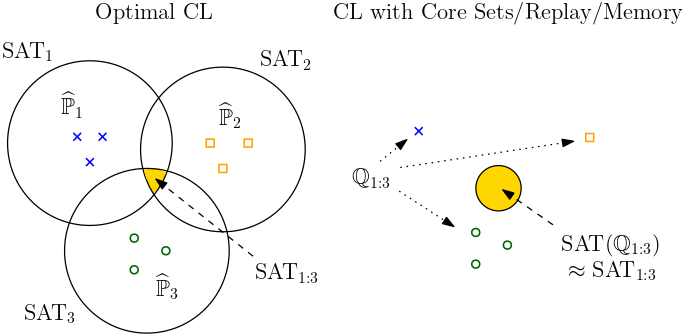}}
            \caption{
            \textbf{Left:} Optimal \CL finds an element $\*\theta_t \in \sat_{1:t}$. \textbf{Right:} \CL algorithms based on core sets/replay/memory find an element $\*\theta_t \in \sat(\mathbb{Q}_{1:t})$, which typically is sufficiently similar to $\sat_{1:t}$.
            }
            \label{Fig:PracticalImplications}
        \end{center}
        \vskip -0.3in
\end{figure}


%
%

In contrast to memorization-based heuristics, regularization-based approaches have to make  inappropriate assumptions about the difficulty of moving from $\sat_{1:(t-1)}$ to $\sat_{1:t}$.
Specifically, the choice of regularizer corresponds to an implicit and strong assumption on the geometry and nature of overlapping regions between $\sat_{1:(t-1)}$ and $\sat_{t}$.
%
%
As this implicit assumption is usually severely violated in practice, regularization-based \CL algorithms often underperform, especially when the number of tasks is very large \citep{threeScenarios, FineGrainedCL}.

%

\section{Conclusion}

With this paper, we have produced the first generic theoretical study of the Continual Learning (\CL) problem. 
We did so by translating the notion of catastrophic forgetting into the language of basic set theory.
With this in hand, we showed that  optimal \CL is generally \NPH and requires perfect memory of the past.
This has two practical ramifications: Firstly, it illustrates that existing \CL algorithms can be seen as polynomial time heuristics targeted at solving an \NPH problem.
Secondly, it  reveals why memorization-based \CL approaches using experience replay, core sets or episodic memory have generally proven more successful than their regularization-based alternatives.

\section*{Acknowledgements}

We thank Andreas Damianou and Shuai Tang for fruitful discussions; Isak Falk, Ollie Hammelijnck for spotting a number of typos in the manuscript and Hans Kersting for the gym tour.
JK is funded by EPSRC grant EP/L016710/1 as part of the Oxford-Warwick Statistics Programme (OxWaSP) as well as by the Facebook Fellowship Programme. 
JK is also supported by the Lloyds Register Foundation programme on Data Centric Engineering through the London Air Quality project and by The Alan Turing Institute for Data Science and AI under EPSRC grant EP/N510129/1 in collaboration with the Greater London Authority.

\appendix

\section{Proofs: Optimal \CL is \NPH}

The main results relating to the computational hardness of \CL which are not proven in the main paper are carefully derived in this supplement.

\subsection{Relationship with other problems}

Notice that at each step, the functions $\A_{\*\theta}$ and $\A_{\*I}$ of \OICL define an \textit{optimization problem}, as we are tasked with a parameter value $\*\theta_t$ at each iteration satisfying the specified criterion on all previous tasks. 

\begin{definition}
     Given a fixed 
     hypothesis class $\F_{\*\Theta}$,
     criterion $\C$,
     a set $A \in \sat_{\mathcal{Q}}$
     and $B \in \sat_{\cap}$ for
     \begin{IEEEeqnarray}{rCl}
         \sat_{\cap} & = & \{ \cap_{i=1}^t A_i: A_i \in \sat_{\mathcal{Q}}, \nonumber \\
         &&                 \quad 1 \leq i \leq t \text{ and } 1\leq t \leq T, \:\: T\in\mathbb{N}\},
         \nonumber
     \end{IEEEeqnarray}  
     the \textbf{\OICL optimization problem} is to find a $\*\theta \in A \cap B$.
     Accordingly, the \textbf{\OICL decision problem} is to decide if a solution exists, i.e. if $\*\theta \in A \cap B \neq \emptyset$.
     %
    \label{def:OICL_optimization_decision_problem}
\end{definition}

We first show that the \OICL optimization problem is at least as hard as its corresponding decision problem. 

\begin{proposition}
    If one can solve the \OICL optimization problem, one can solve the \OICL decision problem.
    \label{proposition:OICL_opt_solves_OICL_dec}
\end{proposition}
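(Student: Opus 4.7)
The plan is to exhibit a direct Turing reduction from the \OICL decision problem to the \OICL optimization problem. Given any instance $(A,B)$ of the decision problem, I would invoke the assumed optimization oracle on exactly that instance. The output is then interpreted as follows: if the oracle returns a candidate $\*\theta$, test whether $\*\theta\in A\cap B$; if yes, declare $A\cap B\neq\emptyset$, and otherwise (or if the oracle signals failure) declare $A\cap B=\emptyset$. The correctness argument splits into the two directions natural to this reduction.

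In the direction $A\cap B\neq\emptyset$, any oracle solving the optimization problem must, by Definition \ref{def:OICL_optimization_decision_problem}, produce some $\*\theta\in A\cap B$; the verification succeeds and the decision algorithm answers ``yes''. In the reverse direction $A\cap B=\emptyset$, there is no $\*\theta$ satisfying $\*\theta\in A\cap B$, so any output of the oracle either corresponds to a failure signal or is a $\*\theta$ that fails the membership check; in either case the algorithm answers ``no''. This establishes that a single oracle call, together with one membership test, converts optimization answers into decision answers, which is exactly the statement of the proposition.

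The only genuine subtlety is that the reduction presumes the ability to verify whether a proposed $\*\theta$ actually lies in $A\cap B$, i.e.\ to evaluate membership in elements of $\sat_{\mathcal{Q}}$ and $\sat_{\cap}$. This is a mild assumption and is in any case implicit in even formulating the optimization problem: without being able to recognize feasible points, ``finding $\*\theta\in A\cap B$'' is not well-defined as a computational task. I would therefore state this verification step explicitly in the write-up (perhaps as a one-sentence remark), and then close with the observation that, since the reduction uses only a single oracle call and a membership test, it is polynomial-time in the size of the instance — so the decision problem is no harder than the optimization problem, as claimed.

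I do not anticipate a genuine obstacle here; the argument is the textbook ``optimization $\geq$ decision'' reduction, and the main care is in phrasing it cleanly within the set-theoretic framework of \OICL established in Definition \ref{def:OICL} and Definition \ref{def:OICL_optimization_decision_problem}.
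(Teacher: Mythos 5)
Your proof is correct and follows essentially the same route as the paper: invoke the optimization oracle and convert its output to a yes/no answer. The paper's version is terser (it defines an indicator function $1(A,B)$ directly from the oracle's output and omits the explicit membership check), while yours spells out the verification step and notes the implicit assumption that membership in sets from $\sat_{\mathcal{Q}}$ and $\sat_\cap$ can be tested — a reasonable clarification, but not a departure from the paper's argument.
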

\begin{proof}
    if the \OICL optimization problem can be solved, then there exists some function $f: \sat_{\mathcal{Q}} \times \sat_{\cap} \to \*\Theta$ such that 
    \begin{IEEEeqnarray}{rCl}
        f(A,B) & = & \*\theta 
        \nonumber
    \end{IEEEeqnarray}
    such that $\*\theta \in A\cap B$, for any $A$ and $B$ as in Definition \ref{def:OICL_optimization_decision_problem}.
    But then, one can construct the indicator function
    \begin{IEEEeqnarray}{rCl}
        1(A,B) & = &
        \begin{cases}
            1 & \text{if } f(A,B) \notin \emptyset \\
            0 & \text{otherwise},
        \end{cases}
        \nonumber
    \end{IEEEeqnarray}
    which clearly solves the decision problem.
\end{proof}

The interpretation of this result is clear: Computationally, the \OICL optimization problem is at least as hard as the \OICL decision problem.
This insight is useful mainly because of the next proposition, which shows that any optimal \CL algorithm can solve the \OICL optimization problem.
\begin{proposition}
    If a \CL algorithm is optimal, then it can solve the \OICL optimization problem.
    \label{proposition:OCL_solves_OICL_opt}
\end{proposition}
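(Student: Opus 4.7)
The plan is to realize every instance of the OICL optimization problem as a particular run of the optimal \CL algorithm on a carefully constructed sequence of tasks. By Lemma \ref{lemma:OCL=OICL}, it suffices to argue for the corresponding optimal \ICL algorithm, which takes sets from $\sat_{\mathcal{Q}}$ as input rather than probability measures; this removes one layer of indirection and lets the reduction be stated directly in terms of $\sat$-sets.

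Given an arbitrary instance $(A, B)$ of the OICL optimization problem with $A \in \sat_{\mathcal{Q}}$ and $B \in \sat_{\cap}$, the first step is to unpack $B$ using the definition of $\sat_{\cap}$: there exist $B_1, \ldots, B_k \in \sat_{\mathcal{Q}}$ and a positive integer $k$ such that $B = \cap_{i=1}^k B_i$. Forming the length-$(k+1)$ sequence $(\sat_1, \ldots, \sat_{k+1}) = (B_1, \ldots, B_k, A)$ and feeding it to the optimal \ICL algorithm yields an output $\*\theta_{k+1}$ that, by Definition \ref{def:OICL}, satisfies $\*\theta_{k+1} \in \sat_{1:(k+1)} = B_1 \cap \cdots \cap B_k \cap A = B \cap A$. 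This $\*\theta_{k+1}$ is precisely a solution to the OICL optimization problem for $(A,B)$. Translating back through Lemma \ref{lemma:OCL=OICL}, the same conclusion is reached by running the underlying optimal \CL algorithm on any sequence $\widehat{\mathbb{P}}_1, \ldots, \widehat{\mathbb{P}}_{k+1}$ in $\mathcal{Q}$ whose $\sat$-images are $B_1, \ldots, B_k, A$; the fact that it does not matter which representative measures are chosen is guaranteed by condition (ii) of Definition \ref{def:optimality}.

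The main obstacle is a subtle consistency check rather than any calculation: one must verify that optimality, as formalized in Definition \ref{def:optimality}, forces the returned parameter to lie in the full intersection $\sat_{1:(k+1)}$ and not merely in the most recent $\sat$-set. This is built directly into condition (i), which requires $\C(\*\theta_t, \widehat{\mathbb{P}}_i) = 1$ for all $i \leq t$, and by the definition of $\sat(\widehat{\mathbb{P}}_i)$ this is equivalent to $\*\theta_t \in \sat(\widehat{\mathbb{P}}_i)$ for all $i \leq t$. A secondary caveat is that the hypothesis that an optimal \CL algorithm exists on $\mathcal{Q}$ implicitly assumes feasibility of every admissible sequence, so the reduction automatically restricts attention to instances with $A \cap B \neq \emptyset$---which is exactly the regime in which the OICL optimization problem admits a solution.
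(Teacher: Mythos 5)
Your proof takes the same approach as the paper's: reduce to the idealized (\ICL) setting via Lemma \ref{lemma:OCL=OICL}, unpack $B \in \sat_{\cap}$ as a finite intersection of sets in $\sat_{\mathcal{Q}}$, feed the sequence $(B_1, \dots, B_k, A)$ to the algorithm, and observe that the output $\*\theta_{k+1}$ lies in $\sat_{1:(k+1)} = B \cap A$ by definition of \OICL. Your two extra remarks---that condition (ii) of Definition \ref{def:optimality} ensures the choice of representative measures $\widehat{\mathbb{P}}_i$ is immaterial, and that the reduction implicitly lives in the regime $A \cap B \neq \emptyset$---are correct and make explicit details the paper's shorter proof leaves implicit.
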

\begin{proof}
    By Lemma 1, 
    it suffices to show this for an optimal \ICL algorithm.
    Suppose that $\A_{\*\theta}$, $\A_{\*\I}$ define an \OICL algorithm.
    For a given problem instance of the \OICL optimization problem with $A \in \sat_{\mathcal{Q}}$ and $ \cap_{t=1}^{T-1} B_t = B \in \sat_{\cap}$ such that $B_t \in \sat_{\mathcal{Q}}$ for all $1\leq t \leq (T-1)$, we can use the \OICL algorithm to solve the problem.
    To see this, construct the problem instance as $\sat_i = B_i$ for $1\leq i \leq (T-1)$ and $\sat_T = A$. 
    Clearly, the value $\*\theta_T$ generated by the \OICL algorithm satisfies that $\*\theta_T \in A\cap B$. 
\end{proof}
Again, this has a clear interpretation: Computationally, the \OICL algorithm is at least as hard as the \OICL optimization problem.

 \subsection{Proof of Lemma 2}
 
A straightforward Corollary follows. Rewriting this result, one obtains Lemma 2. 

\begin{corollary}
    If a \CL algorithm is optimal, then it can solve the \OICL decision problem.

\end{corollary}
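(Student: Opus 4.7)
The plan is to simply chain together the two preceding propositions in the appendix: Proposition \ref{proposition:OCL_solves_OICL_opt} shows that any optimal \CL algorithm can solve the \OICL optimization problem, while Proposition \ref{proposition:OICL_opt_solves_OICL_dec} shows that solving the \OICL optimization problem suffices to solve the \OICL decision problem. Composing these two reductions immediately yields the corollary.

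More concretely, I would first invoke Proposition \ref{proposition:OCL_solves_OICL_opt} to extract, from the given optimal \CL algorithm, a procedure that for any $A \in \sat_{\mathcal{Q}}$ and $B \in \sat_{\cap}$ returns some $\*\theta \in A \cap B$ whenever the intersection is nonempty. This is justified by the construction in Proposition \ref{proposition:OCL_solves_OICL_opt}, which sets $\sat_i = B_i$ for the decomposition $B = \cap_{t=1}^{T-1}B_t$ and $\sat_T = A$, and then appeals to Lemma \ref{lemma:OCL=OICL} to lift from the \ICL side back to the standard \CL side.

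Next, I would feed the output of this optimization procedure into the indicator construction of Proposition \ref{proposition:OICL_opt_solves_OICL_dec}: returning $1$ whenever the procedure yields a valid $\*\theta$ and $0$ otherwise. This indicator, by definition, decides whether $A \cap B = \emptyset$, which is precisely the \OICL decision problem.

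There is no real obstacle here; the entire content of the corollary is the transitivity of reductions. The only subtlety worth noting explicitly is that the argument is really a reduction between computational problems, so one should remark that both reductions are polynomial-time (indeed, essentially free), so that hardness is faithfully transferred. This chained statement is exactly what is needed to feed into Lemma \ref{lemma:decision_problem} and from there into Theorem \ref{thm:NP}, which is why the corollary is stated in this form.
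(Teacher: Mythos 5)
Your proof is correct and takes essentially the same approach as the paper: the paper's one-line proof is precisely to combine Propositions \ref{proposition:OICL_opt_solves_OICL_dec} and \ref{proposition:OCL_solves_OICL_opt}, which is the chaining of reductions you describe. Your added remark about the reductions being essentially free is a harmless and reasonable elaboration.
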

\begin{proof}
    Combine Propositions
    \ref{proposition:OICL_opt_solves_OICL_dec} and \ref{proposition:OCL_solves_OICL_opt}. 
\end{proof}

\subsection{A further refinement}

Indeed, the connection between an optimal \CL algorithm and the \OICL optimization and decision problems can be made much tighter.
The below alternative Lemma could be used in the proof of Theorem 2 to show that optimal \CL does not only solve an \NPH problem, it solves an \NPH problem at \textit{each iteration}.
We do not discuss this in the main paper as the consequences remain the same, though at the expense of complicating the presentation.

\begin{lemma}
    A \CL algorithm is optimal if and only if it solves $T$ \OICL optimization problem instances given by $\{(A_t, B_t)\}_{t=1}^T$ with $A_t = \sat_t = \sat(\widehat{\mathbb{P}}_t)$ and $B_t = \sat_{1:(t-1)} = \cap_{i=1}^{t-1} \sat_i$.
    Similarly, a \CL algorithm is optimal only if it can be used to solve the collection of $T$ \OICL decision problems corresponding to $\{(A_t, B_t)\}_{t=1}^T$.
    \label{proposition:OICL_solves_sequence_of_opt_and_decs}
\end{lemma}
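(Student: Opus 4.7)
The plan is to exploit the equivalence between optimal \CL and optimal \ICL established in Lemma \ref{lemma:OCL=OICL} and then to unfold the relevant definitions. Given this reduction, it suffices to work in the idealized setting throughout and convert back via the construction used in the proof of Lemma \ref{lemma:OCL=OICL}. For the forward direction of the iff, Definition \ref{def:OICL} guarantees that an \OICL algorithm produces $\*\theta_t \in \sat_{1:t}$ at every step. Since $\sat_{1:t} = \sat_t \cap \sat_{1:(t-1)} = A_t \cap B_t$, the iterate $\*\theta_t$ is a valid solution to the $t$-th \OICL optimization instance $(A_t, B_t)$ from Definition \ref{def:OICL_optimization_decision_problem}. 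Iterating over $t = 1, \dots, T$ then produces solutions to all $T$ instances in the stated sequence.

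For the reverse direction, suppose a \CL algorithm (equivalently, via Lemma \ref{lemma:OCL=OICL}, an \ICL algorithm) produces at each step an output $\*\theta_t \in A_t \cap B_t$. Unfolding $A_t \cap B_t = \sat_{1:t}$ immediately gives $\*\theta_t \in \sat_i$ for every $i \leq t$, which is precisely the statement $\C(\*\theta_t, \widehat{\mathbb{P}}_i) = 1$ required by condition (i) of Definition \ref{def:optimality}. Condition (ii) follows because the \ICL algorithm only consumes $\sat_t$: lifting it to \CL through $\widehat{\A}_{\*\theta}(\cdot, \cdot, \widehat{\mathbb{P}}) = \A_{\*\theta}(\cdot, \cdot, \sat(\widehat{\mathbb{P}}))$ produces an algorithm that is automatically invariant under replacements of $\widehat{\mathbb{P}}$ by any $\widehat{\mathbb{Q}}$ with $\sat(\widehat{\mathbb{Q}}) = \sat(\widehat{\mathbb{P}})$. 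For the second assertion (only an ``only if'' is claimed), I would apply Proposition \ref{proposition:OICL_opt_solves_OICL_dec} to each of the $T$ solved optimization instances in turn: a solution $\*\theta_t$ to $(A_t, B_t)$ can be used to construct an indicator deciding whether $A_t \cap B_t = \emptyset$, and iterating over $t$ completes the argument.

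The main obstacle is not any one calculation but rather the bookkeeping around condition (ii) of Definition \ref{def:optimality} in the reverse direction: the bare statement that a procedure ``solves'' the set-theoretic instances $(A_t, B_t)$ does not, by itself, translate into the required invariance of a \CL algorithm, whose inputs are measures $\widehat{\mathbb{P}}_t$ rather than the sets $\sat_t$. The detour through the idealized setting supplied by Lemma \ref{lemma:OCL=OICL} is what delivers this invariance essentially for free, and is the key conceptual step; the rest of the argument is just set-theoretic unfolding.
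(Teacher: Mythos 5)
Your proof takes essentially the same route as the paper's: reduce to the idealized setting via Lemma \ref{lemma:OCL=OICL}, observe that $\*\theta_t \in \sat_{1:t} = A_t \cap B_t$ is precisely the definition of solving the $t$-th \OICL optimization instance, and then invoke Proposition \ref{proposition:OICL_opt_solves_OICL_dec} to pass from the optimization instances to the decision instances. The paper's proof is substantially terser (three sentences) but carries the same content.

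One thing worth noting: you correctly flag in your closing paragraph that the literal ``if'' direction runs into trouble with condition (ii) of Definition \ref{def:optimality}. The bare fact that a \CL algorithm happens to output $\*\theta_t \in A_t \cap B_t$ at every step does not, by itself, give the structural invariance under replacement of $\widehat{\mathbb{P}}$ by any $\widehat{\mathbb{Q}}$ with $\sat(\widehat{\mathbb{Q}}) = \sat(\widehat{\mathbb{P}})$ that condition (ii) demands; your construction $\widehat{\A}_{\*\theta}(\cdot,\cdot,\widehat{\mathbb{P}}) = \A_{\*\theta}(\cdot,\cdot,\sat(\widehat{\mathbb{P}}))$ manufactures \emph{some} \CL algorithm that satisfies (ii), but not necessarily the given one. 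The paper's own proof also sidesteps this and implicitly treats condition (ii) as already in force (which is what makes the Lemma \ref{lemma:OCL=OICL} reduction bidirectional). So this is a real imprecision in the lemma statement rather than a gap in your argument, and since only the ``only if'' direction is used downstream (Theorem \ref{thm:NP}, Lemma \ref{lemma:memory_lemma}), it is inconsequential for the paper's main results.
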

\begin{proof}
    Using Lemma 1 
    once again, it suffices to prove this for an \OICL algorithm.
    The first part of the proposition then follows by definition, as $\*\theta_t \in \sat_{1:t} = \sat_t \cap \left(\cap_{i=1}^{t-1}\sat_i\right)$. Setting  $B = \cap_{i=1}^{t-1}\sat_i$ and $A = \sat_t$ reveals this to be an \OICL optimization problem. 
    The second part follows by combining the first part with the same arguments used in the proof of Proposition \ref{proposition:OICL_opt_solves_OICL_dec}.
\end{proof}

\subsection{Proof of Theorem 1}

With Lemma 2 in place, the proof of Theorem 1 follows by relatively simple arguments that we summarize in two separate propositions.

\begin{proposition}
    If $\mathcal{Q}$ and $\C$ are such that $\sat_{\mathcal{Q}} \supseteq S$ or $\sat_{\cap} \supseteq S$
    so that $S$ is the set of tropical hypersurfaces  or the set of polytopes on $\*\Theta$, 
    then the optimal Idealized \CL decision problem is \NPC.
    \label{proposition:proof_thm_1_1}
\end{proposition}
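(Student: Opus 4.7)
The plan is to show that the \OICL decision problem is \NPC by separately verifying membership in \NP (up to complementation of the decision question) and \NP-hardness via polynomial-time reduction from the cited intersection-emptiness problems. For the membership portion, I would observe that a polynomial-size witness for non-emptiness of $A \cap B$ is a single point $\*\theta \in A \cap B$; checking $\*\theta \in A$ amounts to verifying membership in a finite intersection of half-spaces (polytope case) or in a tropical polynomial locus (tropical case), and likewise for $B$. Both checks are polynomial in the description size of $A$ and $B$. Since emptiness and non-emptiness are complements, \NP-hardness of one is \NP-hardness of the other; this suffices since the paper ultimately uses the decision result only to derive \NP-hardness of the corresponding optimization problem.

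For \NP-hardness itself, I would invoke the hypothesis that either $\sat_{\mathcal{Q}} \supseteq S$ or $\sat_{\cap} \supseteq S$, with $S$ being the collection of polytopes or tropical hypersurfaces on $\*\Theta$. The results of \citet{polyhedralPaper} and \citet{polyhedralThesis} for polytopes and \citet{AlgebraicVarieties} for tropical hypersurfaces establish that deciding emptiness of the intersection of two such objects is \NPC. When $\sat_{\mathcal{Q}} \supseteq S$, the reduction is immediate: given an instance $(P_1, P_2)$ of the corresponding intersection-emptiness problem, set $B = P_2 \in \sat_{\mathcal{Q}}$ and $A = P_1 \in \sat_{\mathcal{Q}} \subseteq \sat_{\cap}$. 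When only $\sat_{\cap} \supseteq S$, I would decompose $P_2 = \bigcap_{j=1}^{k} H_j$ with each $H_j \in \sat_{\mathcal{Q}}$ and take $A = P_1 \cap \bigcap_{j=1}^{k-1} H_j \in \sat_{\cap}$ together with $B = H_k \in \sat_{\mathcal{Q}}$, so that $A \cap B = P_1 \cap P_2$ and emptiness is preserved.

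The main obstacle I anticipate is ensuring the second reduction above produces polynomial-size instances. Specifically, for $\sat_{\cap} \supseteq S$ one must argue that the decomposition of arbitrary elements of $S$ as finite intersections of $\sat_{\mathcal{Q}}$-elements is available without blow-up in description length, so that the resulting $A$ and $B$ remain polynomially bounded in the original input. Granting this mild constructive reading of the hypothesis, the reduction is polynomial-time and the cited hardness results transfer directly to the \OICL decision problem, completing the argument. The converse direction (membership in $\NP$) then closes the gap to \NPC ness.
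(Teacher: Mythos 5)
Your argument follows the same route as the paper's: reduce the \OICL decision problem to the cited $\NP$-complete intersection problems for polytopes and tropical hypersurfaces, using the hypothesis $\sat_{\mathcal{Q}} \supseteq S$ or $\sat_{\cap} \supseteq S$ to realize any hard instance as an \OICL instance. The paper's own proof is a bare one-line citation of \citet{AlgebraicVarieties}, \citet{polyhedralPaper} and \citet{polyhedralThesis}, so your elaboration --- the explicit two-case reduction, the decomposition trick for $\sat_{\cap} \supseteq S$, and the $\NP$-membership remark --- fills in detail the paper omits rather than taking a genuinely different path. Two observations. First, the polynomial-size-decomposition caveat you raise for the $\sat_{\cap}$ case is a real encoding subtlety that the paper also ignores; it is benign in the paper's running Example 1 because there $\sat(\widehat{\mathbb{P}})$ is constructed directly as an intersection of polynomially many half-spaces, each of which lies in $\sat_{\mathcal{Q}}$. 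Second, your $\NP$-membership witness argument (a point of $A \cap B$) implicitly treats the sets as given by explicit, polynomial-time-checkable constraints; this is compatible with $\NP$-hardness of non-emptiness but sits uneasily with $\NP$-completeness of \emph{emptiness}, since that would place non-emptiness in $\NP \cap \mathrm{coNP}$ while being $\mathrm{coNP}$-hard, forcing $\NP = \mathrm{coNP}$. This indicates the cited hardness results use subtler input encodings than explicit half-space lists, so the witness argument does not transfer verbatim. The safest and fully sufficient reading --- which you yourself flag --- is that the proposition delivers $\NP$-hardness, which is all that the proof of Theorem 1 actually consumes.
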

\begin{proof}
    This is a simple application of the results in \citet{AlgebraicVarieties}  for the case where $S$ is the set of tropical hypersurfaces.
    For the case where $S$ is the set of polytopes, it is a simple application of the results in  \citet{polyhedralPaper} and \citet{polyhedralThesis}.
\end{proof}

\begin{proposition}
    The optimal idealized \CL optimization problem is \NPH.
    \label{proposition:proof_thm_1_2}
\end{proposition}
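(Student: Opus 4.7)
The plan is to deduce this proposition by combining two facts already established in the excerpt: Proposition \ref{proposition:OICL_opt_solves_OICL_dec}, which shows that any solver for the \OICL optimization problem can be used to solve the \OICL decision problem; and Proposition \ref{proposition:proof_thm_1_1}, which identifies the \OICL decision problem as \NPC under the stated hypotheses on $\mathcal{Q}$ and $\C$. Together, these make the result follow by the standard reduction-of-decision-to-optimization argument.

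More concretely, I would proceed as follows. First, I would invoke Proposition \ref{proposition:proof_thm_1_1} to conclude that the \OICL decision problem is \NPC, and in particular \NPH. Second, I would note that Proposition \ref{proposition:OICL_opt_solves_OICL_dec} exhibits an explicit polynomial-time reduction from the \OICL decision problem to the \OICL optimization problem: given $(A, B)$, one calls the optimization oracle on $(A, B)$, checks whether the returned object is empty, and answers the decision instance accordingly. Third, I would conclude by the usual contrapositive: if the \OICL optimization problem admitted a polynomial-time algorithm, then composing it with the indicator construction of Proposition \ref{proposition:OICL_opt_solves_OICL_dec} would yield a polynomial-time algorithm for the \NPH decision problem, which is a contradiction unless $\mathrm{P} = \mathrm{NP}$. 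Hence the \OICL optimization problem is itself \NPH.

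Because the whole argument is just a two-line reduction wrapped around already-proved statements, there is essentially no technical obstacle. The only subtlety worth flagging in the write-up is making sure the reduction in Proposition \ref{proposition:OICL_opt_solves_OICL_dec} is polynomial-time in the size of the decision-problem input, i.e.\ that the emptiness test on the returned $\*\theta$ can be performed in constant (or polynomial) time relative to the representation of $(A, B)$. Under the concrete instantiations of Proposition \ref{proposition:proof_thm_1_1} (polytopes or tropical hypersurfaces described by their defining inequalities) this is immediate, so no additional assumption is needed.
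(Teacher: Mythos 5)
Your proposal is correct and follows essentially the same route as the paper: the paper's proof of this proposition is a one-liner citing Proposition~\ref{proposition:OICL_opt_solves_OICL_dec} (optimization is at least as hard as decision), with the \NPC status of the decision problem from Proposition~\ref{proposition:proof_thm_1_1} taken as background. You simply spell out the full chain of reasoning explicitly, including the contrapositive argument and the observation that the reduction is polynomial-time, which the paper leaves implicit.
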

\begin{proof}
    By proposition \ref{proposition:OICL_opt_solves_OICL_dec}, the idealized \CL optimization problem is at least as hard as the idealized \CL decision problem.
\end{proof}

With this in hand, the proof of Theorem 1 is readily obtained by combining Lemma 2 with Propositions \ref{proposition:OCL_solves_OICL_opt}, \ref{proposition:proof_thm_1_1} and \ref{proposition:proof_thm_1_2}.

\begin{proof}
    First, use Lemma 2: 
    Optimal \CL can correctly decide if $A\cap B = \emptyset$, for all $A \in \sat_{\cap}$ and $B \in \sat_{\mathcal{Q}}$. 
    %
    Second, use Proposition \ref{proposition:proof_thm_1_1} to conclude that this decision problem is \NPC. 
    Third, it follows by Proposition \ref{proposition:proof_thm_1_2} that the optimization problem corresponding to an \NPC decision problem is \NPH. 
    Thus, by Proposition \ref{proposition:OCL_solves_OICL_opt}, the result follows.
\end{proof}

\subsection{Proof of Corollary 1}
    
    It is straightforward to generalize the results of Theorem 1 for all collections $\sat_{\mathcal{Q}}$ whose intersections are as hard to compute as polytopes.

    \begin{proof} 
        Re-use the proof of Theorem 1 and note that if the decision problem  is at least as hard as for polytopes, then the computational complexity of derived as a result of Theorem 1 provides a lower bound.
    \end{proof}

\section{Proofs: Optimal \CL has Perfect Memory}

Next, we show give detailed derivations for the perfect memory result in the main paper.

\subsection{Proof of Lemma 3}

For convenience, we compile the results in Lemma 3 into two separate Propositions.

\begin{proposition}
    $\*\theta' \in \E(\*\theta) \Longleftrightarrow \E(\*\theta) = \E(\*\theta')$. Further, whenever $\*\theta' \notin \E(\*\theta)$, it hold that $\E(\*\theta) \cap \E(\*\theta') = \emptyset$. 
    \label{proposition:equivalence_set_properties}
\end{proposition}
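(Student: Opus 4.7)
The plan is to unwind everything to the level of the auxiliary family $S(\*\theta) = \{A \in \sat_{\mathcal{Q}} : \*\theta \in A\}$, since $\E(\*\theta)$ is by definition just $\bigcap_{A \in S(\*\theta)} A$. The strategy is to reduce the biconditional to the symmetric statement $S(\*\theta) = S(\*\theta')$, which in turn yields equality of the equivalence sets, and to then deduce the disjointness claim from the biconditional by a short contrapositive argument. A preliminary observation I will record at the outset is that $\*\theta$ belongs to every $A \in S(\*\theta)$ by the definition of $S(\*\theta)$, so that $\*\theta \in \E(\*\theta)$, and likewise $\*\theta' \in \E(\*\theta')$.

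For the easy direction of the biconditional, if $\E(\*\theta) = \E(\*\theta')$, then the preliminary observation applied to $\*\theta'$ immediately gives $\*\theta' \in \E(\*\theta') = \E(\*\theta)$. For the forward direction, assume $\*\theta' \in \E(\*\theta)$; then $\*\theta'$ lies in every $A \in S(\*\theta)$, so $S(\*\theta) \subseteq S(\*\theta')$, and because intersecting over a larger indexing family only shrinks the result, this immediately yields $\E(\*\theta') \subseteq \E(\*\theta)$. I expect the main obstacle to be the matching inclusion $\E(\*\theta) \subseteq \E(\*\theta')$, because the hypothesis $\*\theta' \in \E(\*\theta)$ is asymmetric and does not on its face say anything about sets containing $\*\theta'$ but not $\*\theta$. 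My plan to close this gap is to show that the construction $\*\theta \mapsto \E(\*\theta)$ is idempotent in the sense that any $A \in \sat_{\mathcal{Q}}$ which meets $\E(\*\theta)$ must already contain all of $\E(\*\theta)$; granting this, every $B \in S(\*\theta')$ meets $\E(\*\theta)$ at $\*\theta'$, and hence contains $\*\theta$, delivering the missing inclusion $S(\*\theta') \subseteq S(\*\theta)$ and therefore $\E(\*\theta) \subseteq \E(\*\theta')$.

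With the biconditional in hand, the disjointness bullet follows by contraposition. Suppose $\*\theta' \notin \E(\*\theta)$ but, for contradiction, that $\E(\*\theta) \cap \E(\*\theta')$ contains some point $\*\theta''$. Applying the just-proved biconditional to the pairs $(\*\theta, \*\theta'')$ and $(\*\theta', \*\theta'')$ yields $\E(\*\theta) = \E(\*\theta'') = \E(\*\theta')$, so in particular $\*\theta' \in \E(\*\theta') = \E(\*\theta)$, contradicting the hypothesis and completing the argument.
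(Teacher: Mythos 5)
You have correctly located the soft spot that the paper's own proof quietly slides past. From $\*\theta' \in \E(\*\theta)$ you get, exactly as you say, that every $A \in \sat_{\mathcal{Q}}$ containing $\*\theta$ also contains $\*\theta'$, so $S(\*\theta) \subseteq S(\*\theta')$ and hence $\E(\*\theta') \subseteq \E(\*\theta)$. The paper's proof at this juncture asserts ``in other words, $\*\theta' \in A \Longleftrightarrow \*\theta \in A$ for all $A \in \sat_{\mathcal{Q}}$,'' upgrading a one-way implication to a biconditional with no argument; you are right that the reverse containment $S(\*\theta') \subseteq S(\*\theta)$ is a genuinely separate claim, and that everything else (the easy direction, the contrapositive for disjointness) is routine once it is in hand.

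The difficulty lies with your plan to close the gap. The claim you lean on---that any $A \in \sat_{\mathcal{Q}}$ meeting $\E(\*\theta)$ must contain all of $\E(\*\theta)$---is exactly the third bullet of Lemma~\ref{lemma:equivalence_set_properties}, which the appendix proves \emph{using} the present proposition; invoked here it is circular, and you supply no independent argument for it. In fact, with $\E(\*\theta)$ defined as $\bigcap_{A \in S(\*\theta)} A$, neither the dichotomy nor the proposition holds in general: take $\*\Theta = \{1,2,3\}$ and $\sat_{\mathcal{Q}} = \{\{1,2\},\{2,3\}\}$, so that $\E(1) = \{1,2\}$ and $\E(2) = \{2\}$; then $2 \in \E(1)$ yet $\E(1) \neq \E(2)$, and $A = \{2,3\}$ meets $\E(1)$ without containing it. The proposition does become true, and your outline goes through essentially verbatim, if one takes $\E(\*\theta)$ to be the set of all $\*\theta'$ belonging to \emph{exactly the same} members of $\sat_{\mathcal{Q}}$ as $\*\theta$ (intersecting the $A \in S(\*\theta)$ \emph{and} the complements of the $A \in \sat_{\mathcal{Q}} \setminus S(\*\theta)$), which is evidently what the authors intend---with that definition, $\*\theta' \in \E(\*\theta)$ just means $S(\*\theta') = S(\*\theta)$, and both your missing inclusion and the dichotomy become immediate. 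As written, though, your argument and the paper's share the same hole; yours at least names it.
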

\begin{proof}
    Suppose that $\*\theta' \in \E(\*\theta)$. From the definition of $\E(\*\theta)$, this immediately implies that any $A \in S(\*\theta)$ contains $\*\theta'$. In other words, $\*\theta' \in A \Longleftrightarrow \*\theta \in A$ for all $A \in \sat_{\mathcal{Q}}$. 
    From this, it immediately follows that $S(\*\theta) = S(\*\theta')$ so that
    \begin{IEEEeqnarray}{rCl}
        \E(\*\theta) 
        =
        \bigcap_{
                A \in S(\*\theta)
        }A
        =
        \bigcap_{
                A \in S(\*\theta')
        }A
        = \E(\*\theta'),
        \nonumber
    \end{IEEEeqnarray}
    which proves the first claim.
    The second claim then follows by contradiction: Suppose there was a point $\widetilde{\*\theta}$ such that $\widetilde{\*\theta} \in \E(\*\theta)  \cap \E(\*\theta')$.
    But then, $\widetilde{\*\theta}  \in \E(\*\theta)$, which by the first claim would imply that $S(\*\theta) = S(\widetilde{\*\theta}) = S(\*\theta')$ so that
    \begin{IEEEeqnarray}{rCl}
        \E(\*\theta) 
        =
        \bigcap_{
                A \in S(\*\theta)
        }A
        =
        \bigcap_{
                A \in S(\widetilde{\*\theta})
        }A
        =
        \bigcap_{
                A \in S(\*\theta')
        }A
        = \E(\*\theta').
        \nonumber
    \end{IEEEeqnarray}
    %
    But since $\*\theta' \notin \E(\*\theta)$ and $\*\theta' \in \E(\*\theta')$ it holds that $\E(\*\theta) \neq \E(\*\theta')$, which yields the desired contradiction.
\end{proof}

\begin{proposition}
    For all $A \in \sat_{\mathcal{Q}}$ and all $\*\theta \in \*\Theta$, either $\E(\*\theta) \subseteq A$ or $\E(\*\theta) \cap A = \emptyset$.
\end{proposition}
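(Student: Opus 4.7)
The plan is to split into two mutually exclusive cases according to whether $\*\theta$ lies in $A$ or not, and in each case read off the conclusion directly from the definition of $\E(\*\theta)$ as the intersection $\bigcap_{A' \in S(\*\theta)} A'$, where $S(\*\theta) = \{A' \in \sat_{\mathcal{Q}} : \*\theta \in A'\}$.

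First I would handle the case $\*\theta \in A$. Here $A$ itself qualifies as an element of $S(\*\theta)$ by definition, so $\E(\*\theta)$, being the intersection of \emph{all} members of $S(\*\theta)$, is a subset of each individual member, giving $\E(\*\theta) \subseteq A$ essentially for free. This is the easy half and requires no auxiliary lemma.

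Next I would handle the case $\*\theta \notin A$, where the conclusion should be $\E(\*\theta) \cap A = \emptyset$. The natural strategy is proof by contradiction: suppose some $\*\theta' \in \E(\*\theta) \cap A$. Since $\*\theta' \in \E(\*\theta)$, the previous Proposition yields $\E(\*\theta') = \E(\*\theta)$, hence in particular $\*\theta \in \E(\*\theta')$. On the other hand, $\*\theta' \in A$ means $A \in S(\*\theta')$, so by the first case applied to $\*\theta'$ we obtain $\E(\*\theta') \subseteq A$. Chaining these gives $\*\theta \in \E(\*\theta') \subseteq A$, contradicting $\*\theta \notin A$.

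The two cases are exhaustive and the argument does not require any new machinery beyond the preceding Proposition \ref{proposition:equivalence_set_properties} (equivalently, the first bullet of Lemma \ref{lemma:equivalence_set_properties}). I do not anticipate any real obstacle here — the only subtle point is remembering to use the characterization $\*\theta' \in \E(\*\theta) \Longleftrightarrow \E(\*\theta) = \E(\*\theta')$ in the contradiction step, since without it one cannot transfer information between $\*\theta$ and a hypothetical witness $\*\theta'$. Everything else is immediate from unfolding the definition of $\E(\cdot)$ as an intersection indexed by containment.
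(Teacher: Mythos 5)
Your proposal is correct and takes essentially the same approach as the paper's own proof: both split on whether $\*\theta \in A$ (equivalently $A \in S(\*\theta)$), dispatch the first case by unfolding $\E(\*\theta)$ as the intersection over $S(\*\theta)$, and dispatch the second by invoking the preceding Proposition's equivalence-set characterization. The only cosmetic difference is that you run the second case as a proof by contradiction, whereas the paper states the same implication directly.
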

\begin{proof}
    This follows by definition of $\E(\*\theta)$: Either $A \in S(\*\theta)$, in which case it must follow that $\E(\*\theta) \subseteq A$. Alternatively, if $A \notin S(\*\theta)$, then by Proposition \ref{proposition:equivalence_set_properties} one has that $A \notin S(\*\theta')$ for any $\*\theta' \in \E(\*\theta)$, which means that $\E(\*\theta) \cap A = \emptyset$. 
\end{proof}

\subsection{Proof of Lemma 4}

We first define the notion of a Decision Problem Oracle set.
Note that Lemma 4 in the main paper revolves around such a Decision Problem Oracle set (albeit without using this name).

\begin{definition}
     Given a set $\sat_{1:t} \in \sat_{\cap}$, a set $C$ is a \textbf{Decision Problem Oracle set} for $\sat_{1:t}$ if
     \begin{IEEEeqnarray}{rCl}
        C \cap A = \emptyset \Longleftrightarrow
        \sat_{1:t} \cap A = \emptyset, \nonumber
     \end{IEEEeqnarray}
     for any $A \in \sat_{\mathcal{Q}}$.
\end{definition}

\begin{proposition}
    If a \CL algorithm is optimal, there exists a function $h: \*\Theta \times \Info \to 2^{\*\Theta}$ such that for $\*\theta_t, \*\I_t$ as in Definition 6, 
    $C_t = h(\*\theta_t, \*\I_t)$ 
    is a Decision Problem Oracle set for $\sat_{1:t}$, and this holds for all $1\leq t \leq T$.
    \label{proposition:C_t_exists}
\end{proposition}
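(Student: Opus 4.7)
My plan is to use Lemma \ref{lemma:OCL=OICL} to pass to the optimal \ICL setting, construct $h$ as the image of the algorithm's own transition function when simulated against every possible next task, and then verify the Decision Problem Oracle property directly from optimality.

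Concretely, after reducing to \ICL, I would define
\[
h(\*\theta, \*\I) := \{\A_{\*\theta}(\*\theta, \*\I, B) : B \in \sat_{\mathcal{Q}}\}
\]
and set $C_t := h(\*\theta_t, \*\I_t)$. This is manifestly a function of the pair $(\*\theta_t, \*\I_t)$ only. The oracle property can then be established via the stronger containment $C_t \subseteq \sat_{1:t}$ combined with a single witnessing element. For the containment, note that every element of $C_t$ arises as $\A_{\*\theta}(\*\theta_t, \*\I_t, B)$ for some $B \in \sat_{\mathcal{Q}}$; by optimality applied to the extended sequence $(\sat_1, \dots, \sat_t, B)$, this element must lie in $\sat_{1:t+1} = \sat_{1:t} \cap B \subseteq \sat_{1:t}$. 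The implication $\sat_{1:t} \cap A = \emptyset \Rightarrow C_t \cap A = \emptyset$ then follows immediately by monotonicity of intersection. For the converse, if $\sat_{1:t} \cap A \neq \emptyset$, taking $B = A$ in the construction above yields $\A_{\*\theta}(\*\theta_t, \*\I_t, A) \in \sat_{1:t} \cap A \subseteq A$, so $C_t \cap A$ contains this element and is non-empty.

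The main obstacle I anticipate is justifying that optimality can legitimately be invoked for every $B \in \sat_{\mathcal{Q}}$ in the containment step: this requires the extended sequence $(\sat_1, \dots, \sat_t, B)$ to admit at least one parameter satisfying $\C$ across all tasks, i.e. $\sat_{1:t} \cap B$ must be non-empty. Rather than introducing this as an extra hypothesis, I would derive it from condition (i) of Definition \ref{def:optimality}: since an optimal \CL algorithm must by assumption succeed on every sequence in $\mathcal{Q}$ for every $T \in \mathbb{N}$, every finite intersection of elements of $\sat_{\mathcal{Q}}$ is automatically non-empty, and in particular $\sat_{1:t} \cap B \neq \emptyset$ for every $B \in \sat_{\mathcal{Q}}$ and every $\sat_{1:t}$ arising along the algorithm's trajectory. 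I expect the bulk of the writeup to be devoted to making this structural observation precise rather than to the construction itself.
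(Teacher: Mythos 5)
Your proposal is correct and follows the same high-level route as the paper: reduce to the idealized setting via Lemma~\ref{lemma:OCL=OICL} and then exhibit a set, depending only on the algorithm's state $(\*\theta_t, \*\I_t)$, whose disjointness against any $A \in \sat_{\mathcal{Q}}$ tracks that of $\sat_{1:t}$. Where you differ is in explicitness. The paper invokes its earlier result that an optimal algorithm solves the \OICL decision problem at each step, and then asserts somewhat informally that ``there must exist a function $g$'' with the required property which can moreover be taken constant in its third argument. You instead exhibit the map directly, $h(\*\theta, \*\I) := \{\A_{\*\theta}(\*\theta, \*\I, B) : B \in \sat_{\mathcal{Q}}\}$, and verify the biconditional by showing $C_t \subseteq \sat_{1:t}$ (one direction by monotonicity of intersection) and by producing the witness $\A_{\*\theta}(\*\theta_t, \*\I_t, A) \in C_t \cap A$ for the other (taking $B = A$). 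This is a genuinely more constructive argument and makes the dependence on $(\*\theta_t, \*\I_t)$ alone manifest rather than argued for after the fact.

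Your closing observation is also correct and is worth spelling out: since condition (i) of Definition~\ref{def:optimality} quantifies over \emph{every} finite sequence in $\mathcal{Q}$, the optimality of the algorithm on every extension $(\widehat{\mathbb{P}}_1, \dots, \widehat{\mathbb{P}}_t, \widehat{\mathbb{Q}})$ forces $\sat_{1:t} \cap \sat(\widehat{\mathbb{Q}}) \neq \emptyset$ for every $\widehat{\mathbb{Q}} \in \mathcal{Q}$, which is exactly what licenses the appeal to optimality that places $\A_{\*\theta}(\*\theta_t, \*\I_t, B)$ inside $\sat_{1:t} \cap B$. The paper leaves this implicit. One small caveat: this same observation means the \OICL \emph{decision} problem is degenerate whenever an optimal algorithm exists for the given $(\C,\mathcal{Q})$, so the substantive content of the oracle-set property collapses to the requirement that $C_t$ meets every $A \in \sat_{\mathcal{Q}}$ --- still a nontrivial structural constraint, and all that the subsequent memory argument actually uses, but a tension with the paper's framing that you should keep in mind rather than trying to resolve inside this particular proof.
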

\begin{proof}
    If the \CL algorithm is optimal, it can solve the \OICL decision problem given by $A = \sat_{t+1}$ and $B = \sat_{1:t}$ at the $(t+1)$-th task. 
    (This follows by combining Lemma 1 
    with Proposition \ref{proposition:OICL_solves_sequence_of_opt_and_decs})
    Specifically, because
    \begin{IEEEeqnarray}{rCl}
        \A(\*\theta_t, \*\I_t, A) \notin \emptyset
        & \Longleftrightarrow &
        \sat_{1:t} \cap A \neq \emptyset,
        \nonumber 
    \end{IEEEeqnarray}
    it is clear that there must exist a function $g: \*\Theta \times \Info \times \sat_{\mathcal{Q}} \to 2^{\*\Theta}$ for which $C_t = g(\*\theta_t, \*\I_t, A)$ is such that
    \begin{IEEEeqnarray}{rCl}
        C_t \cap A \neq \emptyset
        & \Longleftrightarrow &
        \sat_{1:t} \cap A \neq \emptyset.
        \nonumber 
    \end{IEEEeqnarray}
    Furthermore, it is clear that $g$ will be constant in $A$ (since $\sat_{1:t}$ is), so that one can write $C_t = h(\*\theta_t, \*I_t)$ for some $h:\*\Theta \times \Info \to 2^{\*\Theta}$ instead.
\end{proof}

\subsection{Assumptions for the Decision Problem Oracle set}

We use the observation of the last subsection to investigate the memory requirements of optimal \CL algorithms.
Before doing so, we first make some assumptions that are useful for proving Theorem 2 and Corollary 2.

\subsubsection{Assumptions for Theorem 2}

\begin{assumption}[Storage efficiency]
    $C_t \subseteq \sat_{1:t}$
    \label{assumption:storage_efficiency}
\end{assumption}
\begin{assumption}[Information efficiency]
    $C_t \subseteq C_{t-1}$
    \label{assumption:information_efficiency}
\end{assumption}
\begin{assumption}[Finite identifiability]
    There exists a finite sequence of sets $\{A_t\}_{t=1}^T$ in $\sat_{\mathcal{Q}}$ such that $\cap_{t=1}^TA_t = \E(\*\theta)$, for all $\*\theta \in \*\Theta$.
    \label{assumption:finite_identifiability}
\end{assumption}

\begin{remark}
    Assumption \ref{assumption:storage_efficiency} ensures that $C_t$ takes up as little space in memory as possible. 
    To illustrate this, suppose that $\widetilde{C}_t \cap \sat_{t+1}  = \emptyset \Longleftrightarrow \sat_{1:t} \cap \sat_{t+1} = \emptyset$, but that $\widetilde{C}_t \setminus \sat_{1:t} \neq \emptyset$.
    In this case, it clearly holds for $C_t = \widetilde{C}_t \cap \sat_{1:t} \subset \widetilde{C}_t$ that $C_t \cap \sat_{t+1}  = \emptyset \Longleftrightarrow \sat_{1:t} \cap \sat_{t+1} = \emptyset$, too.
    In other words, one can construct an alternative and stritly smaller Decision Problem Oracle set $C_t$ from $\widetilde{C}_t$ by removing all points that are not also in $\sat_{1:t}$.
\end{remark}
\begin{remark}
    Assumption \ref{assumption:information_efficiency} ensures that the algorithm learns monotonically. Specifically, 
    it ensures that each additional task will shrink the set $\sat_{1:t}$ of parameter values that satisfy the criterion $C$ on all task $1,2,\dots t$.
    This is intuitively appealing since it means that the algorithm never incorrectly discards a parameter only to add it back in at a later task.
\end{remark}
\begin{remark}
    Assumption \ref{assumption:finite_identifiability}
    says that equivalence sets are reachable with finitely many tasks. 
    In other words, there exist collections of tasks which satisfy the algorithm's optimality criterion $C$ only if the parameter that is learnt lies in a single equivalence set.
    %
    %
\end{remark}

\subsubsection{Assumptions for Corollary 2}

As we shall see shortly, if we strengthen Assumption \ref{assumption:finite_identifiability}, we can drastically simplify the proof of Theorem 2 and drop the other two assumptions required for the result.

\begin{assumption}[Identifiability]
    $\E(\*\theta) \in \sat_{\mathcal{Q}}$, for all $\*\theta \in \*\Theta$.
    \label{assumption:identifiability}
\end{assumption}
\begin{remark}
    Simply put, this means that each equivalence set can be ``reached'' with a single task. In other words, each equivalence set is identifiable with a single task.
\end{remark}

\subsection{Proof of Theorem 2}

Notice that proving Theorem 2 is equivalent to proving the proposition below.

\begin{proposition}
    Under Assumptions \ref{assumption:storage_efficiency}, \ref{assumption:information_efficiency} and \ref{assumption:finite_identifiability}, any optimal \CL algorithm has perfect memory.
    \label{proposition:perfect_memory1}
\end{proposition}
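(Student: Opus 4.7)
The plan is to verify the two containments in Definition \ref{definition:perfect_memory} directly. The upper bound $C_t \subseteq \sat_{1:t}$ is Assumption \ref{assumption:storage_efficiency}, and by Proposition \ref{proposition:C_t_exists} there exists $h:\*\Theta \times \Info \to 2^{\*\Theta}$ with $C_t = h(\*\theta_t, \*\I_t)$ that serves as a Decision Problem Oracle set, i.e.\ $C_t \cap A = \emptyset \Longleftrightarrow \sat_{1:t} \cap A = \emptyset$ for every $A \in \sat_{\mathcal{Q}}$. The real work is to build a minimal representation $\{f(i)\}_{i\in I}$ whose intersection with each $\sat_{1:t}$ is trapped inside $C_t$.

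The central step is to show that $C_t$ meets every equivalence set contained in $\sat_{1:t}$. I would argue by contradiction: suppose some $\E(\*\theta^{\ast}) \subseteq \sat_{1:t}$ satisfies $C_t \cap \E(\*\theta^{\ast}) = \emptyset$. By Assumption \ref{assumption:finite_identifiability}, write $\E(\*\theta^{\ast}) = A_1 \cap \cdots \cap A_m$ with $A_j \in \sat_{\mathcal{Q}}$; since each $A_j$ equals $\sat(\widehat{\mathbb{P}}_j)$ for some $\widehat{\mathbb{P}}_j \in \mathcal{Q}$, I can extend the task sequence by feeding these as tasks $t+1, \dots, t+m$, yielding $\sat_{1:(t+m)} = \sat_{1:t} \cap \E(\*\theta^{\ast}) = \E(\*\theta^{\ast})$. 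Applying the Oracle property at step $t+m$ to $A_m$ gives $C_{t+m} \cap A_m \neq \emptyset$, so $C_{t+m}$ is non-empty; together with Assumption \ref{assumption:storage_efficiency} this forces $\emptyset \neq C_{t+m} \subseteq \E(\*\theta^{\ast})$. Chaining Assumption \ref{assumption:information_efficiency} then yields $C_{t+m} \subseteq C_t$, so $C_t \cap \E(\*\theta^{\ast}) \neq \emptyset$, a contradiction.

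Constructing the minimal representation is then straightforward. Lemma \ref{lemma:equivalence_set_properties} guarantees that the equivalence sets partition $\cup_{A \in \sat_{\mathcal{Q}}} A$, so a minimal cover $\{\E(\*\theta_i)\}_{i\in I}$ exists. For the given task sequence, let $t^{\ast}(i) = \max\{t : \E(\*\theta_i) \subseteq \sat_{1:t}\}$, setting $t^{\ast}(i) = 0$ if no such $t$ exists, and pick $f(i) \in \E(\*\theta_i) \cap C_{t^{\ast}(i)}$ when $t^{\ast}(i) \geq 1$ (non-empty by the central step) and $f(i) \in \E(\*\theta_i)$ arbitrary otherwise. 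For arbitrary $t$, whenever $f(i) \in \sat_{1:t}$ the third bullet of Lemma \ref{lemma:equivalence_set_properties} applied to each generator of $\sat_{1:t}$ forces $\E(\*\theta_i) \subseteq \sat_{1:t}$, hence $t \leq t^{\ast}(i)$, and Assumption \ref{assumption:information_efficiency} gives $f(i) \in C_{t^{\ast}(i)} \subseteq C_t$. Consequently $\left(\cup_{i\in I}\{f(i)\}\right) \cap \sat_{1:t} \subseteq C_t$, as required.

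The main obstacle is the contradiction in the central step: one has to carefully extend the realized task sequence using sets supplied by Assumption \ref{assumption:finite_identifiability} and then invoke Assumptions \ref{assumption:storage_efficiency} and \ref{assumption:information_efficiency} in the correct temporal order. Assumption \ref{assumption:finite_identifiability} is indispensable here: without a finite identification of each equivalence set from sets in $\sat_{\mathcal{Q}}$, there is no legal way to trap $C_{t+m}$ inside $\E(\*\theta^{\ast})$ and the argument collapses.
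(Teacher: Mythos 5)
Your proof is correct and follows the same underlying logic as the paper's (the key move in both is to invoke Assumption~\ref{assumption:finite_identifiability} to append tasks $A_1,\dots,A_m$ that intersect down to an equivalence set, and then chain Assumptions~\ref{assumption:storage_efficiency} and \ref{assumption:information_efficiency} with the oracle property). However, you state the crucial step more directly than the paper does: you prove, as a self-contained contradiction argument, that $C_t$ itself must intersect every equivalence set $\E(\*\theta^{\ast}) \subseteq \sat_{1:t}$. The paper instead argues that the candidate set $\widetilde{C}_t = F\cap\sat_{1:t}$ is a Decision Problem Oracle set and that $\widetilde{C}_t\setminus\{\*\theta\}$ is not, framing the argument as a minimality claim about a specific $\widetilde{C}_t$ rather than a property of $C_t$; the logical step from ``$\widetilde{C}_t$ is a minimal oracle set'' to ``therefore $C_t\supseteq\widetilde{C}_t$'' is left implicit, and since minimal oracle sets are not unique (different minimal representations give different ones), that leap is where the heavy lifting actually lives. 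Your version makes that lifting explicit. You also supply a construction the paper omits: by setting $t^{\ast}(i)=\max\{t:\E(\*\theta_i)\subseteq\sat_{1:t}\}$, choosing $f(i)\in\E(\*\theta_i)\cap C_{t^{\ast}(i)}$, and then using the third bullet of Lemma~\ref{lemma:equivalence_set_properties} together with Assumption~\ref{assumption:information_efficiency}, you actually exhibit a minimal representation $F$ with $F\cap\sat_{1:t}\subseteq C_t$ for every $t$ along the realized sequence. The one caveat, which applies equally to the paper's own proof, is that the $F$ you build depends on the realized task sequence; whether Definition~\ref{definition:perfect_memory} intends a single $F$ that works uniformly over $\sat_{\cap}$ is not unambiguous from the paper's phrasing, and neither proof establishes the stronger uniform reading.
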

\begin{proof}
    We show this by proving that for some arbitrary minimal representation $\{f(i)\}_{i\in I}$ and $F = \cup_{i\in I}\{f(i)\}$, $C_t \supseteq F \cap \sat_{1:t}$. 

    First, we show that $\widetilde{C}_t = F \cap \sat_{1:t}$ is a Decision Problem Oracle set. In other words, we show that $\widetilde{C}_t \cap A = \emptyset \Longleftrightarrow \sat_{1:t} \cap A = \emptyset$, for all $A \in \sat_{\mathcal{Q}}$ and any $\sat_{1:t} \in \sat_{\cap}$. We do so by contradiction:
    Suppose that $\exists A \in \sat_{\mathcal{Q}}$ so that $\widetilde{C}_t \cap A = \emptyset$, but $\sat_{1:t} \cap A\neq \emptyset$. 
    But then, $A \cap \sat_{1:t}$ contains at least one point, say $\*\theta$. 
    By construction of $\widetilde{C}_t$, it also follows that $F\cap A = \emptyset$.
    This yields the desired contradiction, since by virtue of $A \supseteq E(\*\theta)$ it also implies that $F \cap E(\*\theta) = \emptyset$, even though $F$ contains exactly one point for each equivalence set by definition, including the equivalence set $E(\*\theta)$.
    In other words, it is \textit{sufficient} for the optimal \CL algorithm to be able to reconstruct the Decision Problem Oracle set $\widetilde{C}_t = F \cap \sat_{1:t}$ at task $(t+1)$.

    Second, we demonstrate that this is also \textit{necessary}: Suppose that there exists some $\*\theta \in \widetilde{C}_t$ such that $\widetilde{C}_t \setminus \{\*\theta\}$ is also a Decision Problem Oracle set, for all $1\leq t \leq T$. 
    By virtue of Assumption \ref{assumption:finite_identifiability}, we can construct a finite sequence of sets $\{A_i\}_{i=t+1}^T$ such that $A_i \in \sat_{\mathcal{Q}}$ and $\cap_{i=t+1}^TA_i = E(\*\theta)$. 
    By construction, $\sat_{1:t} \cap \left(\cap_{i=t+1}^TA_i\right) \neq \emptyset$, but $\left(\widetilde{C}_t \setminus \{\*\theta\}\right) \cap \left(\cap_{i=t+1}^TA_i\right) = \emptyset$. 
    Since it also holds that $C_t \subseteq C_{t-1}$, it follows that $\left(\widetilde{C}_{T-1} \setminus \{\*\theta\}\right) \cap A_T = \emptyset$, which completes the proof.
\end{proof}

\subsection{Proof of Corollary 2}

Alternatively, one could drop the first two assumptions and strengthen the third to draw the same conclusion.  

\begin{proposition}
    Under Assumption \ref{assumption:identifiability}, the optimal \CL algorithm has perfect memory.
    \label{proposition:perfect_memory2}
\end{proposition}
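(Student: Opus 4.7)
My approach is to treat Lemma \ref{lemma:memory_lemma} (Proposition \ref{proposition:C_t_exists}) as a black box and let Assumption \ref{assumption:identifiability} do the conceptual work of translating its abstract oracle guarantee into explicit membership information about pre-chosen representatives. The first step is to notice that Definition \ref{definition:perfect_memory} only requires the existence of \emph{one} minimal representation for which both inclusions hold, so I can freely fix an arbitrary minimal representation $\{f(i)\}_{i \in I}$ of $\cup_{A \in \sat_{\mathcal{Q}}} A$ at the outset.

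Next, I would invoke Lemma \ref{lemma:memory_lemma} to obtain a Decision Problem Oracle set $C_t^{0} = h_0(\*\theta_t, \*\I_t)$ satisfying $C_t^{0} \cap A = \emptyset \Longleftrightarrow \sat_{1:t} \cap A = \emptyset$ for every $A \in \sat_{\mathcal{Q}}$. The pivotal observation is that under Assumption \ref{assumption:identifiability}, each equivalence set $\E(f(i))$ is itself a legitimate probe set, i.e. $\E(f(i)) \in \sat_{\mathcal{Q}}$. Combining this with Lemma \ref{lemma:equivalence_set_properties} --- which forces every set in $\sat_{\mathcal{Q}}$, and hence any finite intersection such as $\sat_{1:t}$, to either fully contain a given equivalence set or miss it entirely --- the oracle equivalence collapses to the clean statement $f(i) \in \sat_{1:t} \Longleftrightarrow C_t^{0} \cap \E(f(i)) \neq \emptyset$.

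With this in hand, I would define $h(\*\theta_t, \*\I_t) := \{f(i) : C_t^{0} \cap \E(f(i)) \neq \emptyset\}$ and set $C_t = h(\*\theta_t, \*\I_t)$. The preceding equivalence immediately yields $C_t = \bigl(\cup_{i \in I} f(i)\bigr) \cap \sat_{1:t}$, so both inclusions demanded by Definition \ref{definition:perfect_memory} follow by construction: the lower one with equality, and the upper one because every element of $C_t$ is a representative $f(i)$ already verified to lie in $\sat_{1:t}$. Note that the question of whether $h$ is efficiently computable is irrelevant here, since perfect memory is an information-theoretic property about what is \emph{encoded in} $(\*\theta_t, \*\I_t)$, not about how cheaply one can decode it.

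The only genuinely non-routine step is the translation underlying the second paragraph: without the contained-or-disjoint dichotomy of Lemma \ref{lemma:equivalence_set_properties}, knowing that the oracle set meets $\E(f(i))$ would not let me conclude that the particular representative $f(i)$ itself lies in $\sat_{1:t}$, only that some equivalent point does. That dichotomy is exactly why the much cleaner Assumption \ref{assumption:identifiability} alone is enough, letting us dispense with the storage- and information-efficiency assumptions invoked in the more delicate proof of Theorem \ref{thm:optimal_CL_perfect_memory}.
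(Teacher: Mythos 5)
Your proof is correct, and it is cleaner and more explicit than the one in the paper. Both arguments rest on the same two ingredients---the Decision Problem Oracle set from Lemma~\ref{lemma:memory_lemma} and the contained-or-disjoint dichotomy from Lemma~\ref{lemma:equivalence_set_properties}---but you deploy them differently. The paper's proof reasons about the oracle set $C_t$ from Lemma~\ref{lemma:memory_lemma} directly (together with the sufficiency/necessity arguments inherited from Theorem~\ref{thm:optimal_CL_perfect_memory}), and asserts that $\sat_{1:t} \supseteq C_t \supseteq \sat_{1:t}\cap F$ ``readily follows'' once one may take $A = \E(\*\theta)$. But that step is subtle: the oracle property with $A = \E(f(i))$ only guarantees that $C_t$ meets the equivalence class $\E(f(i))$, not that it contains the specific pre-chosen representative $f(i)$, and picking representatives to lie inside $C_t$ would make the minimal representation depend on $t$, which Definition~\ref{definition:perfect_memory} does not permit. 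You sidestep this entirely by not arguing about the oracle $C_t^0$ itself; instead you post-process it into a fresh $h$ whose output is, by construction, exactly $F\cap\sat_{1:t}$ for a minimal representation fixed once and for all. Since Definition~\ref{definition:perfect_memory} only asks for the \emph{existence} of such an $h$, this is perfectly legitimate, and the equivalence $f(i) \in \sat_{1:t} \Longleftrightarrow C_t^0 \cap \E(f(i)) \neq \emptyset$ that you derive from Lemma~\ref{lemma:equivalence_set_properties} and Assumption~\ref{assumption:identifiability} is exactly the right reduction. The upshot: same lemmas, but a different decomposition---you build the witness $h$ explicitly rather than arguing about minimality of oracle sets---which buys a proof with no hidden quantifier issues and makes it transparent why Assumption~\ref{assumption:identifiability} alone suffices.
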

\begin{proof}
    The proof of sufficiency is exactly equal to the one in Proposition \ref{proposition:perfect_memory1}.
    The proof of necessity follows along the same lines as before but is even easier: Since one can always select $A = \E(\*\theta)$, $\sat_{1:t} \supseteq C_t \supseteq \sat_{1:t} \cap F$ readily follows. (Indeed, it follows that $C_t = \sat_{1:t}$ because $\sat_{1:t} \cap F$ for any $F = \cup_{i\in I}\{f(i)\}$ generated through a Minimal Representation $\{f(i)\}_{i\in I}$.)
\end{proof}

\bibliography{library}
\bibliographystyle{icml2019}

\end{document}